\def\eqref#1{equation~\ref{#1}}
\def\1{\bm{1}}
\DeclareMathAlphabet{\mathsfit}{\encodingdefault}{\sfdefault}{m}{sl}
\SetMathAlphabet{\mathsfit}{bold}{\encodingdefault}{\sfdefault}{bx}{n}
\definecolor{mydarkorange}{HTML}{B86046}
\definecolor{codegreen}{rgb}{0,0.6,0}
\definecolor{codegray}{rgb}{0.5,0.5,0.5}
\definecolor{codepurple}{rgb}{0.58,0,0.82}
\theoremstyle{plain}
\newtheorem{theorem}{Theorem}
\newtheorem{proposition}{Proposition}
\newtheorem{lemma}{Lemma}
\theoremstyle{definition}
\newtheorem{definition}{Definition}
\newtheorem{assumption}{Assumption}
\theoremstyle{remark}
\newtheorem{remark}{Remark}
\title{Deconstructing Positional Information: From Attention Logits to Training Biases}
\author{
  \textbf{Zihan Gu}$^{1,2}$, \textbf{Ruoyu Chen}$^{1,2}$, \textbf{Han Zhang}$^{3}$, \textbf{Hua Zhang}$^{1,2,}$ \thanks{Corresponding Author}\quad, \textbf{Yue Hu}$^{1,2}$ \\
  $^{1}$Institute of Information Engineering, Chinese Academy of Sciences; \\ $^{2}$School of Cyber Security, University of Chinese Academy of Sciences; \\ $^{3}$Antai College of Economics and Management, Shanghai Jiao Tong University.\\
  \texttt{\{guzihan,chenruoyu,zhanghua,huyue\}@iie.ac.cn} 
}
\begin{document}

\maketitle

\begin{abstract}
Positional encodings enable Transformers to incorporate sequential information, yet their theoretical understanding remains limited to two properties: distance attenuation and translation invariance. Because natural language lacks purely positional data, the interplay between positional and semantic information is still underexplored. We address this gap by deconstructing the attention-logit computation and providing a structured analysis of positional encodings, categorizing them into additive and multiplicative forms. The differing properties of these forms lead to distinct mechanisms for capturing positional information. To probe this difference, we design a synthetic task that explicitly requires strong integration of positional and semantic cues. As predicted, multiplicative encodings achieve a clear performance advantage on this task. Moreover, our evaluation reveals a hidden training bias: an information aggregation effect in shallow layers that we term the single-head deposit pattern. Through ablation studies and theoretical analysis, we proved that this phenomenon is inherent in multiplicative encodings. These findings deepen the understanding of positional encodings and call for further study of their training dynamics. Our code is available at \url{https://github.com/Qihuai27/Deposit-Pattern-Research}.
\end{abstract}

\section{Introduction}
\label{sec:introduction}
Positional Encoding (PE)~\citep{vaswani2017attention,SU2024127063} endows Transformer architectures with an understanding of sequence order, compensating for the permutation-invariant nature of self-attention~\citep{vaswani2017attention}. This component is critical to their success in both language~\citep{vaswani2017attention} and vision~\citep{dosovitskiy2021an} tasks. The design of PE has evolved substantially, from the additive sinusoidal embeddings~\citep{vaswani2017attention} of the original Transformer to trainable absolute embeddings~\citep{shaw2018self} and more sophisticated relative schemes like T5 biases~\citep{raffel2020exploring} and Rotary Positional Encoding (RoPE)~\citep{SU2024127063}. This progression reflects a sustained effort to enhance generalization, extend context length, and improve model efficiency.

Different positional encodings model distance decay and translation invariance in different ways, but these differences are difficult to observe in the face of complex data with human grammatical rules. Current evaluation paradigms, which rely on aggregate metrics such as perplexity~\citep{HuaFoPE2025} or length-extrapolation benchmarks~\citep{kazemnejad2023impact}, document these performance differences but fail to illuminate the underlying mechanisms of how positional signals are processed~\citep{peng2024yarn,HuaFoPE2025}. Therefore, theoretical research on positional encoding relies on synthesis tasks~\citep{zuo-etal-2025-position,kazemnejad2023impact}, but some unexplained phenomena have emerged. For example, 
despite its strong theoretical properties, such as the attenuation characteristic that supports length generalization, RoPE can underperform simpler relative PEs or even models with no PE on certain tasks~\citep{kazemnejad2023impact}. This discrepancy highlights a critical gap in our understanding:
\textbf{\textit{How, precisely, do different PE schemes mediate the interaction between token content and position?}}
This lack of mechanistic clarity hinders both the explanation of existing performance trade-offs and the principled design of future PE schemes. 

To bridge this gap, our approach is grounded in a core principle of positional encoding: the relationship between any two tokens should depend on their \textbf{relative distance}, not their absolute positions. This property of translation invariance is formally captured by the structure of a \textbf{Toeplitz matrix}, in which all elements along any given diagonal are identical. This insight allows us to deconstruct the attention logit calculation and propose a unified, Toeplitz-based framework for analyzing PE mechanisms (Figure~\ref{fig:schematic}). In our framework, each token's embedding is composed of a \textbf{content component} (what it is) and a \textbf{position component} (where it is). The central idea is that these position components collectively induce a Toeplitz structure in the final attention scores.

\begin{figure}[t]
  \centering
  \vspace{-25pt}
  \includegraphics[width=0.8\textwidth]{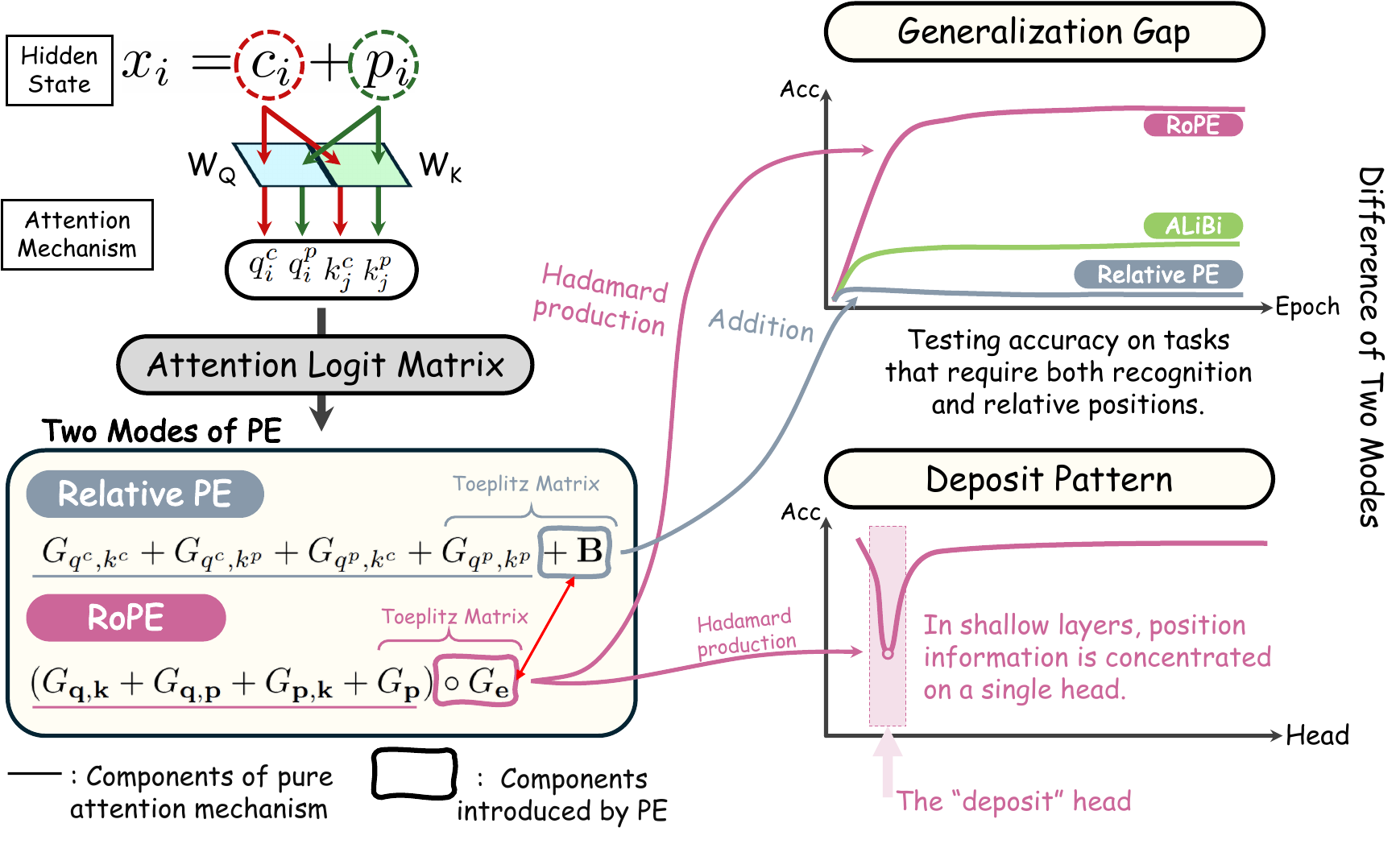}
  \vspace{-6pt}
  \caption{\textbf{Schematic Overview.}
\textbf{(Left)} Each token is viewed as \(x_i=c_i+p_i\). After the
\(W_Q,W_K\) projections, their interactions form the four inner-product
terms of the logit matrix. The framework unifies PE methods by how
they introduce Toeplitz (translation-invariant) structure: additive PEs
(Absolute, T5, ALiBi) contribute a Toeplitz bias \(B\) or a Toeplitz 
\(G_{q^p,k^p}\); RoPE applies a relative-position Toeplitz kernel \(G_e\)
multiplicatively through a Hadamard product.
\textbf{(Right)} These two modes produce distinct behaviors:
RoPE’s multiplicative coupling gives faster learning and smaller
generalization gaps, yet concentrates shallow-layer positional information
into a ``deposit head''.}
  \label{fig:schematic}
  \vspace{-16pt}
\end{figure}
This framework clearly distinguishes between two primary classes of PE. \textbf{Additive methods}, such as relative position biases, operate by adding a static Toeplitz matrix of position scores directly to the attention logits. In contrast, \textbf{multiplicative methods} like RoPE dynamically integrate positional information into the query and key representations. This creates a powerful \textbf{content--position interaction}, where the positional influence on attention scores is conditioned on the token's content. While this coupling provides a significant advantage on tasks where meaning is tightly bound to relative location, we argue that it also introduces a strong inductive bias. This bias can manifest as a drawback, concentrating the learning of positional logic into a small, specialized subset of attention heads.

Observations based on the theoretical framework directly triggered our experimental design: we conduct experiments on a series of carefully designed synthetic tasks. We generate random sequences from a small vocabulary and use labeled ``anchor'' tokens to define two contrasting objectives: a \textbf{position-sensitive task} requiring reasoning about the relative positions of anchors, and a \textbf{position-agnostic task} dependent only on token counts. This controlled setup enables us to precisely isolate and evaluate the model's positional reasoning capabilities. Our experiments confirm that RoPE decisively outperforms other methods on the position-sensitive task while underperforming on the position-agnostic one. More importantly, this setting reveals a striking artifact unique to RoPE: the \textbf{``single-head deposit pattern''}, where nearly all positional processing in the shallow layers becomes concentrated into a single attention head.

Further investigation confirms this phenomenon is unique to RoPE on tasks with strong content-position correlation. Through rigorous ablation studies and theoretical derivation---such as injecting absolute positional encodings or selectively applying RoPE to a subset of heads---we demonstrate that the deposit pattern is an \textbf{intrinsic property} of RoPE's multiplicative structure, not an incidental training artifact. This finding also explains why Transformers using RoPE tend not to form the latent positional representations observed in models with additive or no PE~\citep{zuo-etal-2025-position, kazemnejad2023impact}. We posit that this intense, structurally-induced specialization is a primary cause of the gap between RoPE's theoretical promise and its practical performance.

Our contributions can be summarized as follows:
\begin{itemize}[leftmargin=*, itemsep=0pt, topsep=0pt]
    \item We propose a unified analytical framework using Toeplitz matrices that categorizes positional encodings into additive and multiplicative mechanisms, clarifying their distinct effects on attention logits.
    \item We empirically identify and analyze RoPE's performance paradox through targeted synthetic tasks, revealing a unique `single-head deposit pattern' where positional logic becomes highly concentrated in shallow layers.
    \item We conduct a causal analysis to demonstrate that the deposit pattern is an intrinsic property of RoPE's multiplicative architecture, offering a mechanistic explanation for its observed performance paradox.
\end{itemize}

\section{Related Work}
\label{sec:related-work}
\textbf{Positional Encoding in Transformers.}  
Positional encoding was introduced alongside the Transformer architecture to endow self-attention with sequence order information~\citep{vaswani2017attention}. Early approaches employed learnable positional embeddings~\citep{shaw2018self}, while subsequent work integrated position biases directly into the attention logits, giving rise to relative position encoding and its extensions~\citep{press2022alibi,li2024fire,chi-etal-2023-dissecting,chi2022kerple,raffel2020exploring,SU2024127063,zhao2025drope}. Among these, Rotary Positional Encoding (RoPE)~\citep{SU2024127063} stands out by applying a multiplicative bias to the $QK^\top$ matrix, and has been widely adopted in modern open‐source models. Related efforts on sliding‐window and length‐generalization designs are also often framed as positional encoding variants~\citep{kiyono-etal-2021-shape,chen2024clex,chen2024longlora,ding2024longrope,peng2024yarn}. Recent RoPE‐based improvements include Frequency‐Partitioned Encoding (FoPE)~\citep{HuaFoPE2025} and the Multi‐head Latent Attention (MLA) mechanism in Deepseek‐V3~\citep{liu2024deepseek}.

\textbf{Mechanistic Analyses of Positional Encoding.}  
Empirical studies have probed how Transformers internalize positional signals. Some works demonstrate that even without explicit positional embeddings (NoPE), models can learn positional distinctions~\citep{wang-etal-2024-length,zuo-etal-2025-position,haviv-etal-2022-transformer,barbero2024round}, occasionally outperforming PE‐equipped counterparts on select tasks~\citep{kazemnejad2023impact}. Explanations range from implicit causal masking~\citep{zuo-etal-2025-position} to dataset‐dependent effects~\citep{barbero2024round}. Other analyses document characteristic behaviors of RoPE, such as “massive value” phenomena and rotation artifacts~\citep{jinMassiveValuesSelfAttention2025,jonasson2025rotary}.

\textbf{Toeplitz Structure and Spectral Theory.}  
The action of positional encoding in attention can be unified under Toeplitz‐matrix representations~\citep{bottcher2000analysis,grenander2002toeplitz,gray2006toeplitz,oppenheim1999discrete}. Classical results on Toeplitz spectral distributions, including Szegő’s theorem, inform our theoretical framework. Moreover, the impact of matrix spectra on optimization convergence is well studied in numerical linear algebra and convex optimization contexts~\citep{horn1985matrix,horn1991topics,boyd2004convex,nesterov2004introductory}.
\section{Methodology}
\label{sec:method}

\subsection{Preliminaries: Positional Encoding in Attention}

The Transformer attention score between a query $q_i$ at position $i$ and a key $k_j$ at position $j$
is based on their inner product, $q_i^\top k_j$.
Since this computation does not depend on sequence order, Positional Encoding (PE) schemes introduce
positional information by modifying either the token representations or the attention logits.

\textbf{Absolute PE}~\citep{vaswani2017attention} adds a position vector $p_i$ to the initial embedding $e_i$:
\[
h_i = e_i + p_i .
\]
The query and key vectors derived from $h_i$ therefore mix content and position inside the representation,
making the attention score depend on the \emph{absolute} token positions.

\textbf{Relative PE}~\citep{raffel2020exploring} incorporates positional information directly into the attention logits
through a learnable distance-dependent term:
\[
\alpha_{i,j} \propto \exp\bigl(q_i^\top k_j + b_{j-i}\bigr).
\]
A defining feature of relative PE is \emph{translation invariance}:
the positional contribution depends only on the displacement $j-i$, not on the absolute locations of $i$ and $j$.
Within this family, T5~\citep{raffel2020exploring} uses a learned table indexed by $(j-i)$,
while ALiBi~\citep{press2022alibi} applies a fixed slope that increases linearly with distance.
Thus both are translation-invariant, but differ in whether the distance mapping is learned (T5) or predetermined (ALiBi).

\textbf{Rotary PE (RoPE)}~\citep{SU2024127063} introduces position by rotating the query and key vectors
with a position-dependent matrix $R_i$.
A key identity shows that the resulting inner product depends only on the relative displacement:
\[
(R_i q_i)^\top (R_j k_j) = q_i^\top R_{j-i}^\top k_j .
\]
Thus RoPE implements positional encoding through a structured rotation that couples
content and position via relative position, rather than through additive shifts or explicit bias terms.

\subsection{A Unified Framework via Toeplitz Matrices}

To analyze these mechanisms within a common framework, we introduce two foundational assumptions. The first is a conceptual decomposition of token representations.

\begin{assumption}[Representation Decomposition]\label{ass1}
Any token representation $x_i$ can be conceptually decomposed into a position-independent \textbf{content component} $c_i$ and a position-dependent \textbf{position component} $p_i$, such that $x_i = c_i + p_i$.
\end{assumption}
\begin{remark}
This decomposition is not a modeling constraint but a structural lens. It is supported in four complementary ways:
\textbf{(i) Design:} Absolute PE explicitly defines position-dependent vectors $p_i$ \citep{vaswani2017attention};
\textbf{(ii) Emergence:} Architectures lacking PE still develop internal positional directions \citep{zuo-etal-2025-position};
\textbf{(iii) Functionality:} Ablating $p_i$ hurts performance more than ablating random vectors of equal norm (Section~\ref{abl4});
\textbf{(iv) Generality:} We treat the removal of $p_i$ as a manifold-based coordinate subtraction rather than a rigid linear projection, ensuring compatibility with diverse PE schemes.
\end{remark}

This allows us to analyze how PE schemes construct or interact with the positional component $p_i$. The core principle uniting relative PE schemes is \textbf{translation invariance}: the positional component of the attention score depends only on the relative distance $j-i$. This principle is perfectly embodied by a specific matrix structure.

\begin{definition}[Toeplitz Matrix]
A matrix $\mathbf{T}$ is a Toeplitz matrix if its entries are constant along each diagonal, i.e., $T_{i,j} = a_{i-j}$ for some sequence $\{a_k\}$. It is defined by $2N-1$ unique values for an $N \times N$ matrix.
\end{definition}

Such matrices naturally arise whenever a quantity depends only on the relative displacement $j-i$, making them the canonical representation of translation-invariant positional interactions.
This leads to our second assumption, which connects the positional components of tokens to the structure of the attention matrix.

\begin{assumption}[Toeplitz Structure from Positional Interaction]\label{ass2}
Since the positional contribution to attention depends only on the relative displacement $j-i$, the Gram matrix formed by the position-dependent components naturally takes a Toeplitz form.
\end{assumption}

\begin{remark}
    Assumption 3.2 can be viewed less as a modeling assumption and more as a working hypothesis that captures the intended translation-invariant decay, and that the $p_i$ decomposed from Assumption 3.1 should have such a property. This assumption could be checked by sinusoidal absolute PEs and is explicitly enforced by construction in relative PE biases. It provides a powerful analytical tool for understanding the structure of positional information in the attention matrix. 
\end{remark}

Under Assumptions~\ref{ass1}--\ref{ass2}, the attention logits can be decomposed into interpretable content and position interactions. Let $q_i^c, k_j^c$ be query/key vectors derived from content components and $q_i^p, k_j^p$ be those from position components. Let $G_{v,w}$ denote the Gram matrix where $(G_{v,w})_{i,j} = v_i^\top w_j$.

For \textbf{additive mechanisms} (e.g., Absolute or Relative PE), the logit matrix is a sum of components:
\begin{align}
\mathbf{L}_{\text{Additive}} = G_{q^c,k^c} + G_{q^c,k^p} + G_{q^p,k^c} + G_{q^p,k^p} + \mathbf{B}
\end{align}
Here, $\mathbf{B}$ denotes the explicit relative-bias matrix used only by relative encodings such as T5 and ALiBi.  
For absolute or learned absolute PEs, we instead have $\mathbf{B}=0$, and the Toeplitz positional structure is provided solely by the term $G_{q^p,k^p}$. Thus, in all additive mechanisms the Toeplitz component arises either from $G_{q^p,k^p}$ (absolute PE) or from $\mathbf{B}$ (relative PE), while the cross-terms $G_{q^c,k^p}+G_{q^p,k^c}$ remain the only channel through which
content can interact with relative position.

In contrast, \textbf{multiplicative mechanisms} like RoPE induce a different structure. Following its formulation using complex numbers in ~\cite{SU2024127063}, the logit matrix becomes:
\begin{align}
\mathbf{L}_{\text{RoPE}} = \operatorname{Re}\left\{ \left(G_{q^c,k^c} + G_{q^c,k^p} + G_{q^p,k^c} + G_{q^p,k^p}\right) \circ G_{\mathbf{e}} \right\}
\end{align}
where $\circ$ is the Hadamard product and $G_{\mathbf{e}}$ is a Toeplitz matrix.
\begin{remark}
The complex form of RoPE is an important component of the original paper~\citep{SU2024127063}. This form, combined with the Abel summation, can prove an upper bound on its distance decay. A detailed derivation of how to convert the complex RoPE form into the Toeplitz matrix form above is provided in Appendix~\ref{detail}. This formulation highlights that RoPE does not add position information; instead, it \emph{modulates} all content interactions through a shared position-dependent kernel.
\end{remark}

This formal analysis reveals a critical distinction. Additive methods combine content and position signals, but RoPE's multiplicative structure uses a Toeplitz kernel ($G_{\mathbf{e}}$) to modulate the \textit{entire} content-content and content-position interaction. This enables a more direct and expressive coupling, allowing the model to learn attention patterns where the relevance of a token's content is conditioned on its relative position to another. This type of task cannot be achieved through pure translation invariance; instead, it requires that a specific translation distance be directly affected by the content feedback. We hypothesize that this powerful coupling mechanism comes with a drawback: by routing the learning of positional dependencies through this multiplicative kernel, RoPE may create a strong inductive bias that encourages \textbf{positional specialization}, concentrating this logic into a small subset of attention heads. This hypothesis motivates the experiments in the following section.
\section{Experiments}
\label{sec:experiment}

To empirically test our hypothesis about the distinct behaviors of additive and multiplicative positional encodings, we design two synthetic tasks. These tasks allow us to isolate the model's ability to couple token content with positional information in a controlled setting. We use a 6-layer Transformer decoder as our base model and evaluate six PE configurations: Absolute~\citep{vaswani2017attention}, T5 Relative Bias~\citep{raffel2020exploring}, ALiBi~\citep{press2022alibi}, RoPE~\citep{SU2024127063}, NoPE (no explicit PE)~\citep{kazemnejad2023impact}, and a Random initialized embedding baseline. Our experiments are averaged over five seeds, and the variance is less than 5\%. Full implementation and training details are provided in Appendix~\ref{support}.

\subsection{Synthetic Task Design}

\noindent\textbf{Task 1 -- Position-Sensitive: Relative Distance Classification.}
This task is designed to require strong content-position coupling, the theorized strength of multiplicative mechanisms. Each input sequence contains two unique "trigger" words drawn from a vocabulary of 1,000 tokens. The model must predict the relative distance between them, formulated as a classification problem over binned distances. Success requires the model to identify \textit{what} the trigger words are and determine \textit{where} they are in relation to each other. We generated a dataset of 100,000 examples for this task.

\noindent\textbf{Task 2 -- Position-Agnostic: Trigger Word Counting.}
This task serves as a control, designed to penalize any inflexible positional bias. The model must predict the total number of occurrences of a predefined set of 20 trigger words within a sequence, regardless of their positions. Because positional information is a nuisance variable, an ideal model should learn to ignore it. We generated a dataset of 50,000 examples for this task.

\subsection{Performance and Generalization Analysis}

We first analyze the learning dynamics and generalization performance of each PE method on our two tasks. The results are presented in Figures~\ref{fig:task1_convergence} and~\ref{fig:task2_convergence}.

\begin{figure}[htbp]
  \centering
  \includegraphics[width=\textwidth]{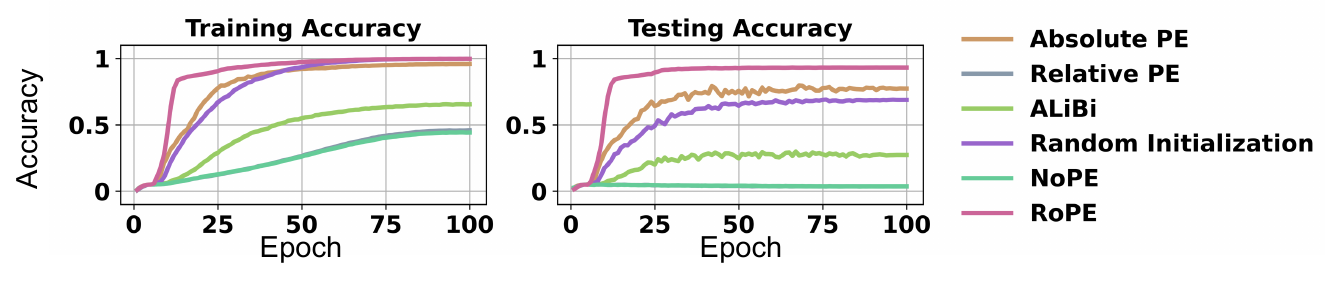}
  \vspace{-28pt}
  \caption{Training and test accuracy on \textbf{Task 1 (Position-Sensitive)}. RoPE's multiplicative coupling enables it to significantly outperform all other methods in both convergence speed and final generalization.}
  \vspace{-6pt}
  \label{fig:task1_convergence}
\end{figure}

\noindent\textbf{On the Position-Sensitive Task, RoPE Excels.}
As predicted by our framework, RoPE (Figure~\ref{fig:task1_convergence}) demonstrates superior performance. It converges fastest and achieves the highest test accuracy with a minimal generalization gap. This supports our theory that its multiplicative content-position coupling is uniquely suited for tasks demanding this form of reasoning. In contrast, additive methods show mixed results. Absolute PE performs second best, as its initial positional signal can be adapted by subsequent layers. The Random embedding baseline learns the training set but fails to generalize, as its non-Toeplitz structure lacks the translation invariance required to learn a general rule. Methods like ALiBi, with its fixed and data-agnostic Toeplitz bias, struggle to adapt, while NoPE and standard Relative PE fail entirely, lacking any mechanism to bind content to position.

\begin{figure}[htbp]
  \centering
  \includegraphics[width=\textwidth]{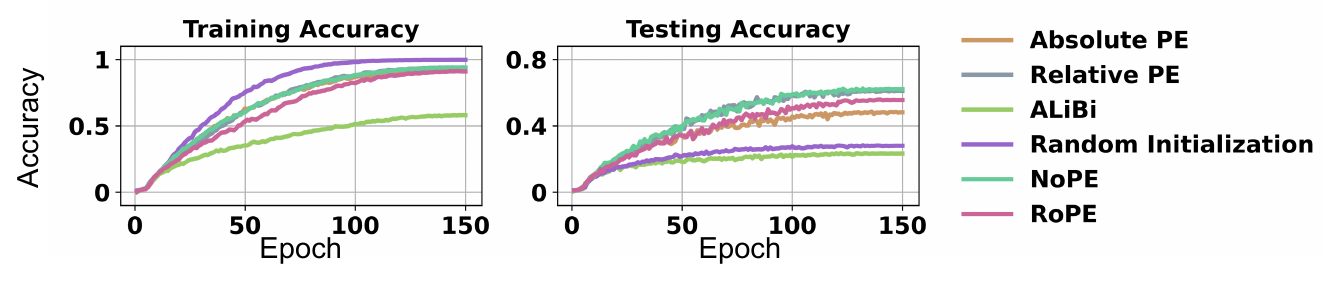}
  \vspace{-28pt}
  \caption{Training and test accuracy on \textbf{Task 2 (Position-Agnostic)}. Strong, inflexible positional biases like ALiBi are detrimental. RoPE's adaptive nature allows it to perform well, nearly matching methods with no positional bias.}
  \vspace{-6pt}
  \label{fig:task2_convergence}
\end{figure}

\noindent\textbf{On the Position-Agnostic Task, Strong Positional Biases are Detrimental.}
Figure~\ref{fig:task2_convergence} confirms our second prediction: on a task where position is irrelevant, a strong positional inductive bias is harmful. ALiBi, with its rigid, non-adaptive bias, suffers the most severe performance degradation. Conversely, NoPE and Relative PE (which learns to attenuate its bias towards zero) perform well, as they do not impose a counterproductive positional structure. RoPE also performs strongly, demonstrating its adaptability; its multiplicative mechanism allows the model to learn to down-weight the positional signal when it is not useful. Absolute PE incurs a small penalty, while the Random baseline again overfits, learning spurious correlations between positions and counts in the training data.

\subsection{Evidence of the Single-Head Deposit Pattern in RoPE}

Our theory suggests that RoPE's multiplicative structure concentrates positional reasoning into a few specialized heads. To find direct evidence for this phenomenon, we conduct a head-wise causal ablation study on the models trained on Task 1. For each head in the network, we zero out its output and measure the resulting drop in test accuracy. A large drop indicates that the head is critical to the task.

\begin{figure}[ht]
  \centering
  \vspace{-20pt}
  \includegraphics[width=\textwidth]{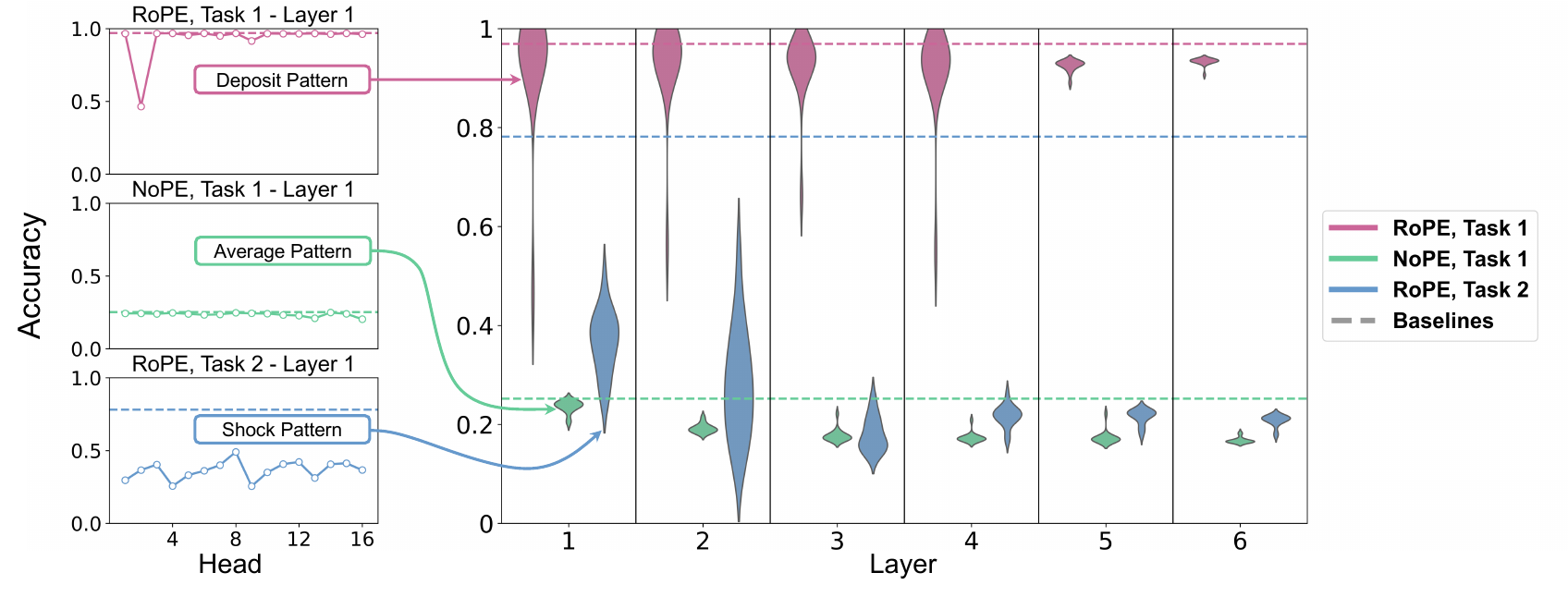}
  \vspace{-16pt}
  \caption{\textbf{Head-wise ablation reveals a unique "Deposit Pattern" in RoPE.} Each violin shows the distribution of accuracies after ablating each head in that layer; long vertical tails correspond to outlier heads whose removal causes a large accuracy drop.  The three insets illustrate the characteristic shapes of these distributions: a ``deposit'' shape with a single deep outlier (RoPE--Task~1), a flat ``average'' shape (NoPE--Task~1), and a noisy ``shock'' shape (RoPE--Task~2).  These shapes match the contours of the main violins. Full per-head curves are provided in Appendix~\ref{app:full-head-ablation}.}
  \vspace{-16pt}
  \label{fig:single_head_ablation}
\end{figure}

The results, shown in Figure~\ref{fig:single_head_ablation}, provide striking confirmation of our hypothesis.
\textbf{In the RoPE model, nearly all positional logic is localized to a single head in the first layer.} Ablating this one head causes a catastrophic drop in accuracy ($\approx$60\%), while ablating other heads has a negligible effect. We term this phenomenon the \textbf{``single-head deposit pattern.''} Crucially, this pattern is unique to the combination of the RoPE architecture and a position-sensitive task. As shown in Figure~\ref{fig:single_head_ablation}, the pattern does not emerge in the NoPE model on the same task, nor does it appear in the RoPE model trained on the position-agnostic Task 2. This confirms that the deposit pattern is not a generic artifact of Transformer training but a direct consequence of RoPE's multiplicative content-position coupling mechanism. We therefore interpret the deposit pattern as a training bias rather than clean modularity, although a full causal link to length generalization is left for future work (Appendix~\ref{relation}).
\section{Ablation Studies and Mechanistic Insights}
\label{sec:ablation}

The discovery of the "single-head deposit pattern" raises critical questions about the underlying mechanics of RoPE. Is this pattern an inevitable side effect of multiplicative coupling, an efficient specialization, or a wasteful bottleneck? To elucidate the mechanisms driving this phenomenon, we design three targeted ablation studies, each testing a specific hypothesis.

\subsection{Ablation 1: Does RoPE Suppress Latent Positional Representations?}\label{abl1}

\noindent\textbf{Hypothesis:} We hypothesize that RoPE's explicit, multiplicative positional signal is so potent that it inhibits the model from forming its own implicit positional representations—a known capability of standard Transformers~\citep{zuo-etal-2025-position,kazemnejad2023impact}—thereby causing other heads to remain position-agnostic.

\noindent\textbf{Experimental Design:} To test this, we inject a redundant signal by augmenting a RoPE-equipped Transformer with an additional Absolute PE at the input layer. This directly provides an explicit $p_i$ component before the RoPE-enabled attention layers. We then observe if this alters the deposit pattern.

\begin{figure}[h]
    \centering
    \vspace{-12pt}
    \includegraphics[width=0.8\textwidth]{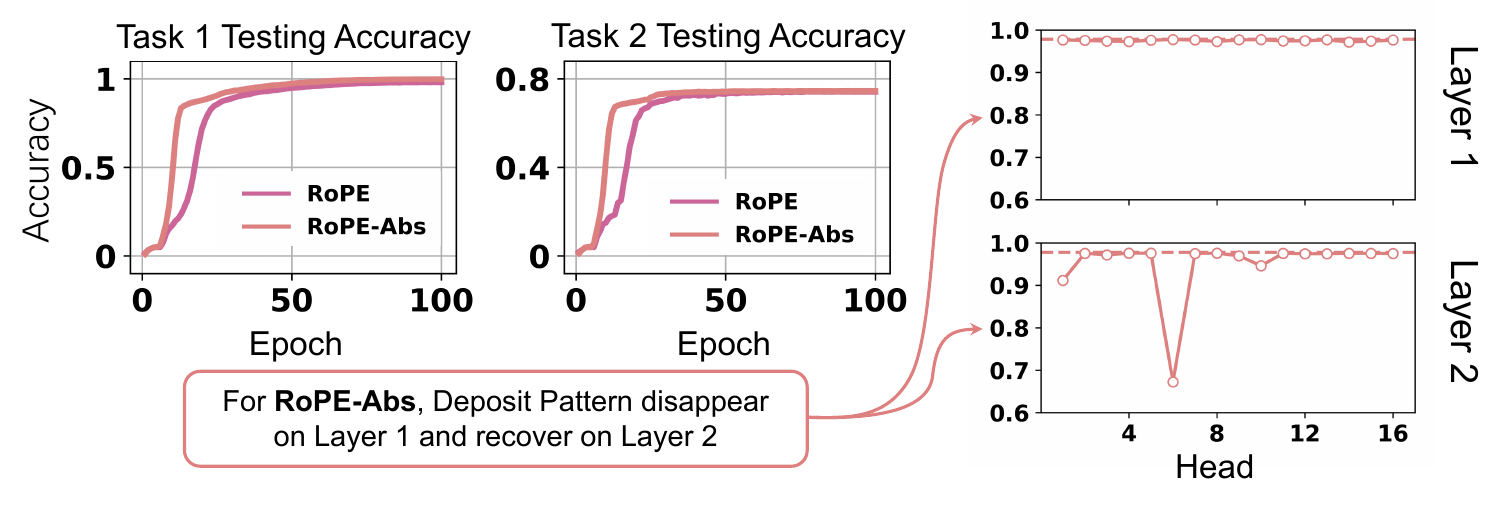}
    \vspace{-5pt}
    \caption{\textbf{(Left)} Performance of a hybrid RoPE + Absolute PE model. \textbf{(Right)} The deposit pattern is initially suppressed in Layer 1 but re-emerges in deeper layers, demonstrating the strong inductive bias of RoPE.}
    \label{fig:abs_rope_ablation}
    \vspace{-14pt}
\end{figure}

\noindent\textbf{Results:} The results, shown in Figure~\ref{fig:abs_rope_ablation}, are revealing. The explicit Absolute PE signal disrupts the deposit pattern in the first layer, distributing positional responsibility more evenly. However, the pattern \textbf{re-emerges} in deeper layers. This strongly suggests that while an initial explicit signal can be utilized, the powerful inductive bias of RoPE's multiplicative coupling eventually reasserts itself, overriding the additive signal. This confirms our hypothesis: RoPE's mechanism is so dominant that it suppresses the model's tendency to form or utilize other forms of positional representation.

\subsection{Ablation 2: Is a Single RoPE-Enabled Head Sufficient?}

\noindent\textbf{Hypothesis:} The deposit pattern implies that for tasks like ours, a single RoPE-enabled head is sufficient for positional reasoning, rendering additional RoPE heads redundant.

\noindent\textbf{Experimental Design:} We test this by systematically reducing the number of attention heads that utilize RoPE, from all heads down to just one, with the rest operating as NoPE heads. We evaluate performance on Task 1.

\begin{figure}[htbp]
  \centering
  \vspace{-4pt}
  \includegraphics[width=0.9\textwidth]{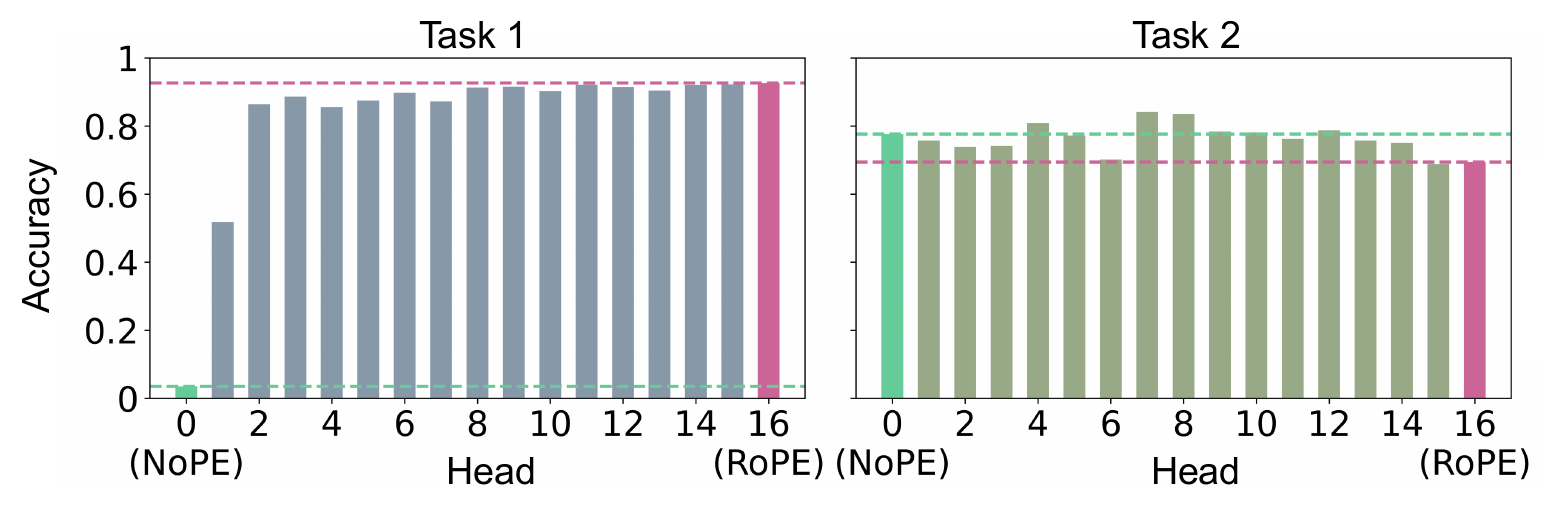}
  \vspace{-16pt}
  \caption{Performance on Task 1 when RoPE is applied to a diminishing subset of heads. Performance remains near-optimal even with just one or two RoPE-enabled heads.}
  \vspace{-6pt}
  \label{fig:partial_rope}
\end{figure}

\noindent\textbf{Results:} As shown in Figure~\ref{fig:partial_rope}, performance on Task 1 remains near-optimal even when only one or two heads are equipped with RoPE. This confirms that a small number of specialized heads are indeed sufficient to capture the necessary positional information for this task. The result supports our explanation for the deposit pattern: once one head effectively learns the content-position interaction via the multiplicative mechanism, other heads become inert with respect to this task, as there is no learning incentive to develop redundant positional capabilities.

\subsection{Ablation 3: Can a Hybrid Architecture Mitigate Inefficiency?}

\noindent\textbf{Hypothesis:} The deposit pattern, while effective, is an inefficient use of model capacity. We hypothesize that a hybrid architecture can mitigate this by explicitly dedicating resources, retaining RoPE's strengths while improving robustness.

\noindent\textbf{Experimental Design:} To explore this, we evaluate the \textbf{Multi-Latent Attention (MLA)} architecture from DeepSeek~\citep{liu2024deepseek}. MLA formalizes a hybrid approach by creating parallel pathways for position-agnostic and position-sensitive processing within a single head. Formally, the query-key inner product is structured as a concatenation of a NoPE component and a RoPE component:
\begin{align}
\langle q_i, k_j \rangle = \langle (W_{UQ}x_i \,;\, R_i W_{UR}x_i), (W_{UK}x_j \,;\, R_j W_{UR}x_j) \rangle,
\end{align}
where $(\,;\,)$ denotes vector concatenation and the projection matrices for each pathway are distinct. This structure ensures that each attention head computes both a NoPE-style and a RoPE-style logit simultaneously.
Interpreted through our theoretical framework, this mechanism effectively modifies the logit matrix by creating a weighted combination of a standard content matrix and a RoPE-modulated one:
\begin{align}
\mathbf{L}_{\text{MLA\_complex}} = \left(G_{\mathbf{q},\mathbf{k}} + G_{\mathbf{q},\mathbf{p}} + G_{\mathbf{p},\mathbf{k}} + G_{\mathbf{p}}\right) \circ (\alpha G_{\mathbf{e}} + I ).
\end{align}
This prevents the purely multiplicative signal from dominating and encourages a more distributed representation of position. We replaced our standard attention with the MLA module and evaluated it on both tasks.

\begin{minipage}[t!]{0.55\textwidth}
    \centering
    \includegraphics[width=\textwidth]{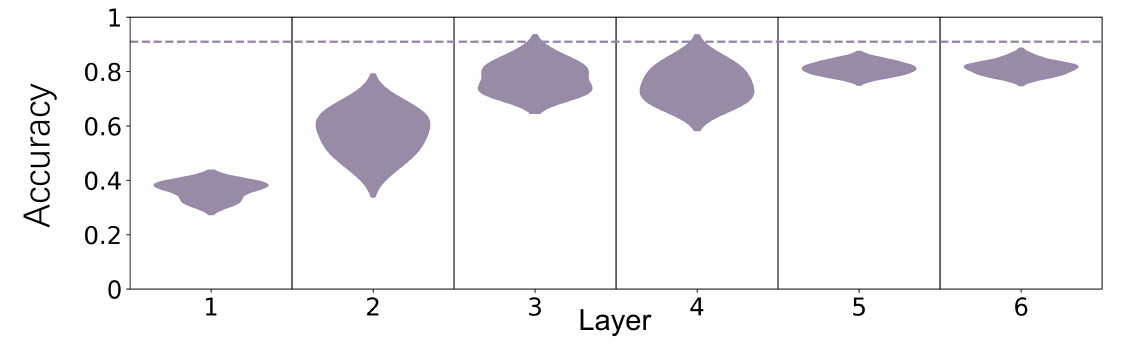}
    \vspace{-18pt}  
    \captionof{figure}{Head‐wise ablation violin plot under MLA.}
    \label{fig:mla_ablation}
\end{minipage}%
\hfill
\begin{minipage}[t!]{0.4\textwidth}
    \centering  
    \begin{tabular}{c|c|c}
       Method & Task 1  &  Task 2  \\
       \hline
        MLA & 88.34  & 97.41 \\ 
        RoPE & 92.64  & 69.43  \\
        NoPE & 3.51  & 77.69 
    \end{tabular}
    \vspace{-2pt}
    \captionof{table}{MLA performance comparison on the two tasks.}
    \label{tab:mlacom}
\end{minipage}

\noindent\textbf{Results:} The MLA architecture successfully mitigates the deposit pattern. Figure~\ref{fig:mla_ablation} shows that no single head is indispensable, and positional responsibility is diffused across the model. Furthermore, Table~\ref{tab:mlacom} shows that MLA nearly matches RoPE's performance on Task 1 while dramatically improving on Task 2. This demonstrates that by preventing over-specialization, the hybrid model achieves a more robust balance.

\subsection{Ablation 4: Where Do Additive and Multiplicative PEs Store Positional Information?}
\label{abl4}
\noindent\textbf{Hypothesis.}
Given the decomposition $x_i = c_i + p_i$ (Assumption~\ref{ass1}), we ask
whether different PE mechanisms retain an explicit positional direction $p_i$ in
the activations, or whether this information is absorbed into the
query--key parameterization.  
Our hypothesis is that additive PEs preserve activation-level $p_i$, while RoPE
rapidly suppresses it.

\noindent\textbf{Experimental Design.}
For each layer $\ell$, we ablate either (1) the fixed positional vectors $p_i$
(Absolute PE) or (2) norm-matched random vectors, and measure test accuracy.
A larger drop under $p_i$-removal indicates that the model relies on an explicit
positional direction.  
For RoPE, we repeat the same intervention on the hybrid Absolute+RoPE model
(Section~\ref{abl1}), where $p_i$ is added only at the embedding layer.

\begin{figure}[ht]
\centering
\vspace{-12pt}
\includegraphics[width=0.6\linewidth]{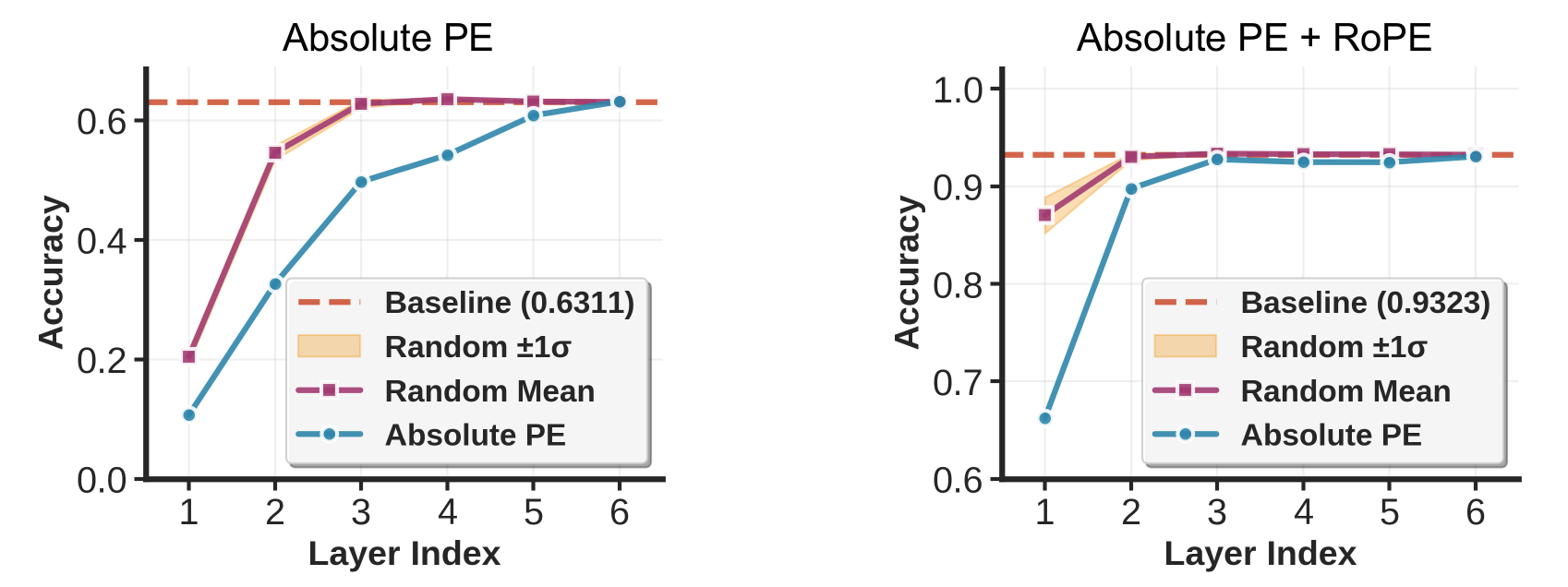}
\vspace{-10pt}
\caption{Layer-wise ablation of absolute positional vectors versus 
norm-matched random vectors.  
Left: Absolute PE shows consistently larger drops under $p_i$-removal, confirming
that additive PEs preserve explicit positional directions in the activations.
Right: For Absolute+RoPE, this difference vanishes after Layer~2, indicating that
RoPE suppresses activation-level positional directions and shifts positional
structure into the attention parameterization.}
\label{fig:p-removal}
\vspace{-10pt}
\end{figure}

\noindent\textbf{Results.}
Fig.~\ref{fig:p-removal} shows a clear contrast: 
(1) For Absolute PE, removing $p_i$ causes a consistently larger accuracy drop 
than removing norm-matched random vectors, indicating that additive PEs preserve 
explicit positional directions in the activations; 
(2) For Absolute+RoPE, this difference disappears after Layer~2, showing that 
RoPE suppresses activation-level positional directions and shifts positional 
information into the query--key parameterization.

\noindent\textbf{Summary.}
Additive PEs store position in explicit activation directions $p_i$, whereas 
RoPE rapidly eliminates these directions and embeds positional structure in the 
frequency-modulated parameters. This is why the structure of MLA can eliminate this phenomenon.
\section{Theoretical Analysis of the Deposit Pattern}
\label{sec:theory}
The empirical discovery of the "single-head deposit pattern" in RoPE~\citep{SU2024127063} invites a formal theoretical explanation. In this section, we provide a mathematical argument demonstrating that this phenomenon is an inherent consequence of RoPE's design.
We analyze the gradient signal for the relative distance classification task (Task 1). Let $(i_s, j_s)$ be the positions of the trigger words for a sample $s$, with the task being to predict the relative distance $d_s := |j_s - i_s|$. 
The algebraic intuition behind our proof is this: for a sample labeled $|i-j|$, the optimization rewards the distance signal $|i-j|$ and penalizes other distance signals. For addition, the reward and penalty are always antagonistic because the position bias lacks context information. However, for multiplication, the penalty is attenuated by frequency, making the reward cumulative. This accumulation ultimately leads to an initial top-level bias in the gradient signal, which is amplified by backpropagation. As a comparison with RoPE, we also analyze ALiBi~\citep{press2022alibi} on this task.

Our analysis relies on the following idealized assumptions, which are empirically plausible in the context of early training.
\begin{assumption}[Formal Setup for Gradient Analysis]
\label{ass:formal_setup}
\
\begin{enumerate}
    \item[\textbf{(A1)}] \textbf{Jacobian Stability.} Each inter-layer Jacobian $J^{(l)}$ has singular values bounded in $[1-\alpha_l, 1+\alpha_l]$ for $\alpha_l \in (0,1)$, consistent with architectures using pre-normalization.
    
    \item[\textbf{(A2)}] \textbf{Monotone Anchor Path.} For each sample $s$, there is an "anchor" row $u_s$ where the gradient signal is positively correlated with the target token's value vector. Formally, $\langle \partial \ell_s/\partial Y^{(h,L)}_{u_s,:}, V^{(h,L)}_{j_s,:} \rangle \ge \eta_s^{(h)} > 0$. This formalizes that a basic learning signal exists.
    
    \item[\textbf{(A3)}] \textbf{Projection Separability (for ALiBi).} The projection matrices $W_Q, W_K, W_V$ are such that $\ker W_Q \cap \ker W_K \not\subset \ker W_V$. This is a generic condition in overparameterized models.
\end{enumerate}
\end{assumption}

Our analysis demonstrates a fundamental difference in how RoPE and ALiBi structure the gradient signal for relative position tasks.

\begin{proposition}[RoPE Gradient Seed Coherence]
\label{prop:rope_seed}
Under Assumption (A2), the aggregated anchor gradient for RoPE, $H^{(h)}_{L}(d)$, has a deterministic lower bound. A strictly positive seed ($H^{(h)}_{L}(d) > 0$) is guaranteed if the learning signal is sufficiently strong to satisfy the condition $a_*^{(h)}(d)\,\eta_*^{(h)}(d) > C^{(h)}\chi_L$.
\end{proposition}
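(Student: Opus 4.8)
\emph{Proof strategy.} The plan is to write the aggregated anchor gradient $H^{(h)}_{L}(d)$ as a sum, over all training samples whose target distance equals $d$, of the gradient signal that (A2) guarantees at the anchor row $u_s$, projected through RoPE's multiplicative kernel onto the attention score at the correct relative displacement. Using the multiplicative RoPE logit decomposition $\operatorname{Re}\{\,\cdot\,\circ G_{\mathbf{e}}\}$ from the unified framework, the derivative of $\ell_s$ at the anchor row splits into a \textbf{reward term}, supported on the diagonal band at displacement $d$, and \textbf{interference terms} from the other binned displacements $d'\neq d$ that the softmax couples in. I would organize the whole argument around this reward/interference split.

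Second, I would lower-bound the reward. At the matching displacement the frequency components of $\operatorname{Re}\{G_{\mathbf{e}}\}$ combine in phase, so the reward amplitude is bounded below by a head-specific constant $a_*^{(h)}(d)>0$; combined with (A2), every sample with $d_s=d$ contributes at least $a_*^{(h)}(d)\,\eta_s^{(h)}$ to $H^{(h)}_{L}(d)$ with a consistent sign. Setting $\eta_*^{(h)}(d) := \min_{s:\,d_s=d}\eta_s^{(h)}$ then yields the coherent lower bound $a_*^{(h)}(d)\,\eta_*^{(h)}(d)$ on the reward contribution.

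Third---and this is where I expect the main difficulty---I would upper-bound the interference. At a mismatched displacement the same frequency components acquire relative phases $\omega_k(d'-d)$ and partially cancel; this is exactly the \emph{frequency attenuation} of the multiplicative kernel, in contrast to an additive bias whose wrong-distance penalty is phase-independent and hence antagonistic (the comparison that later distinguishes ALiBi). I would make this rigorous by an Abel summation-by-parts estimate over the geometrically spaced RoPE frequencies $\{\omega_k\}$, mirroring the distance-decay bound already invoked for RoPE's complex form, to obtain an interference bound $C^{(h)}$ independent of the coherent amplitude. The hard part is showing this cancellation is uniform across all bins $d'$ and survives the softmax coupling, since it requires controlling a Dirichlet-kernel-type sum rather than a single term.

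Finally, I would transport the seed to the read-out layer using (A1): the inter-layer Jacobian bounds $[1-\alpha_l,1+\alpha_l]$ accumulate multiplicatively into the factor $\chi_L$, which rescales the interference to $C^{(h)}\chi_L$ while preserving the sign of the reward. Assembling the three estimates gives the deterministic lower bound
\begin{align}
H^{(h)}_{L}(d) \;\ge\; a_*^{(h)}(d)\,\eta_*^{(h)}(d) - C^{(h)}\chi_L,
\end{align}
so the hypothesis $a_*^{(h)}(d)\,\eta_*^{(h)}(d) > C^{(h)}\chi_L$ forces $H^{(h)}_{L}(d)>0$, establishing the strictly positive seed that backpropagation subsequently amplifies into the deposit pattern.
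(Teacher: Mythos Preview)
Your proposal misidentifies every one of the three quantities in the bound, and as a result the whole architecture of the argument is off. In the paper's setup (Appendix~C.1), $a_*^{(h)}(d):=\inf_{s:\,d_s=d} A^{(h,L)}_{u_s,j_s}$ is the minimum \emph{post-softmax attention weight} on the anchor edge, not a phase-alignment amplitude of $G_{\mathbf e}$; $\chi_L:=\max_i\|A^{(h,L)}_{i,:}\|_2$ is the attention row sharpness at the top layer, not a product of Jacobian factors from (A1); and $C^{(h)}:=\|W_O\|_2\,\|V^{(h,L)}\|_2\,\sup_s\|\partial\ell_s/\partial y\|_2$ is an operator-norm constant, not an Abel-summation frequency envelope. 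The proposition is a statement entirely about layer $L$; there is no ``transport to the read-out layer,'' and (A1) plays no role here---it is reserved for Theorem~\ref{thm:exp_amp}.

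The paper's proof is far more direct than what you sketch. It applies the row-softmax gradient identity (Lemma~C.1),
\[
\frac{\partial \ell_s}{\partial S^{(h,L)}_{u_s,j_s}} = A^{(h,L)}_{u_s,j_s}\Big(\Lambda^{(h,L)}_{u_s,j_s} - \langle \Lambda^{(h,L)}_{u_s,:},A^{(h,L)}_{u_s,:}\rangle\Big),
\]
so the ``reward/interference'' split is exactly the two terms in parentheses. The reward $\Lambda^{(h,L)}_{u_s,j_s}\ge \eta_s^{(h)}$ is immediate from (A2). The interference is bounded by a single Cauchy--Schwarz step, $|\langle\Lambda_{u_s,:},A_{u_s,:}\rangle|\le \|\Lambda_{u_s,:}\|_2\|A_{u_s,:}\|_2\le C^{(h)}\chi_L$: no Dirichlet-kernel control, no Abel summation, no bin-by-bin frequency cancellation. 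Summing over the $N\hat\mu_N(d)$ samples with $d_s=d$ gives $H^{(h)}_L(d)\ge N\hat\mu_N(d)\bigl(a_*^{(h)}(d)\eta_*^{(h)}(d)-C^{(h)}\chi_L\bigr)$; your final display also drops this $N\hat\mu_N(d)$ prefactor. The frequency-attenuation narrative you build is the \emph{intuition} the paper states in the paragraph before the proposition, but it is not the mechanism of the proof.
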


\begin{proposition}[ALiBi Gradient Cancellation]
\label{prop:alibi_cancel}
Under Assumptions (A3), for any empirical distance distribution $\hat\mu_N$, there exists a batch of samples realizing $\hat\mu_N$ such that the aggregated anchor gradient for ALiBi vanishes for all heads and distances: $H^{(h)}_{L}(d) \equiv 0$.
\end{proposition}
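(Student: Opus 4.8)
The plan is to turn the defining weakness of ALiBi -- that its positional signal enters the logits only through the content-\emph{independent} additive bias $b_{j-i}$ -- into an explicit cancellation construction. First I would write the aggregated anchor gradient at a fixed distance as the anchor-correlation sum
\[
H^{(h)}_{L}(d)\;=\;\sum_{s:\,d_s=d}\Big\langle \partial \ell_s/\partial Y^{(h,L)}_{u_s,:},\;V^{(h,L)}_{j_s,:}\Big\rangle ,
\]
and note that, because the bias is added identically regardless of token content, the entire attention map (and hence the backward sensitivity $g^{(h)}_s:=\partial\ell_s/\partial Y^{(h,L)}_{u_s,:}$) depends on content only through the query--key logits $q^\top k$. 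Assumption~(A3) is exactly the handle this provides: there is a content direction lying in $\ker W_Q\cap\ker W_K$ but not in $\ker W_V$, i.e. a direction that is \emph{invisible} to the attention pattern yet \emph{active} in the value pathway.

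The construction then proceeds distance by distance. For each $d$ in the support of $\hat\mu_N$, I realize its prescribed count of samples with a \emph{common} attention-visible trigger content (the component in $(\ker W_Q\cap\ker W_K)^{\perp}$ held fixed across the group, so that all samples share one attention map and therefore one $g^{(h)}_s$), and I use the invisible directions supplied by~(A3) to set the remaining freedom. Since (A3) is assumed generically in the overparameterized regime, the value-images $\{W_V w:\,w\in\ker W_Q\cap\ker W_K\}$ span enough of each head's output space to solve the linear system $\sum_{s:\,d_s=d}V^{(h,L)}_{j_s,:}=0$ \emph{simultaneously for every head} while leaving the fixed visible content untouched. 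The positions $(i_s,j_s)$ and their multiplicities are never altered, so the batch still realizes $\hat\mu_N$; and with $g^{(h)}_s\equiv g^{(h)}$ common across the group we obtain $H^{(h)}_{L}(d)=\big\langle g^{(h)},\sum_{s}V^{(h,L)}_{j_s,:}\big\rangle=0$ for every head $h$ and every distance $d$.

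The step I expect to be the main obstacle is the clause ``$g^{(h)}_s$ common across the group.'' Strictly, perturbing the invisible component changes the value vectors, which propagate forward and perturb the output, so the backward sensitivities are not literally identical. I would discharge this using Assumption~(A1): in the early-training linearization the inter-layer Jacobians are near-isometries with singular values in $[1-\alpha_l,1+\alpha_l]$, so the map from an invisible-direction perturbation to $g^{(h)}_s$ is a bounded linear correction about a shared base point. Choosing the invisible components in balanced $\pm$ groups makes this first-order correction antisymmetric and hence cancels it in the sum, while the zeroth-order term -- fixed by the shared attention map and the shared label $d$ -- is common by construction. This yields $H^{(h)}_{L}(d)\equiv 0$ exactly in the idealized regime for which the assumptions are stated.

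Finally, it is worth recording why this construction cannot be mounted against RoPE, which pins down the contrast with Proposition~\ref{prop:rope_seed}. Under RoPE every content direction is first rotated by the relative-position kernel $G_{\mathbf e}$ before it enters the logit, so no direction is simultaneously invisible to the attention pattern and free in the value pathway: the multiplicative coupling forbids the very separation that (A3) grants ALiBi. The content factor can therefore not be balanced to zero, which is precisely the mechanism producing the strictly positive seed of Proposition~\ref{prop:rope_seed}.
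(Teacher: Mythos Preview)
Your overall instinct---use (A3) to produce content directions invisible to $W_Q,W_K$ but active through $W_V$, then balance them in $\pm$ groups---matches the paper's, and your closing remark on why the same construction fails under RoPE is correct. The specific construction, however, diverges from the paper's and has a gap.

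\textbf{The gap.} Your starting identity
\[
H^{(h)}_{L}(d)\;=\;\sum_{s:\,d_s=d}\big\langle \partial \ell_s/\partial Y^{(h,L)}_{u_s,:},\;V^{(h,L)}_{j_s,:}\big\rangle
\]
is not the aggregated anchor gradient the paper works with; by the row-softmax identity (Lemma~\ref{lem:softmax_grad}), the anchor-edge gradient is
\[
\frac{\partial\ell_s}{\partial S^{(h,L)}_{u_s,j_s}}=A^{(h,L)}_{u_s,j_s}\bigl(\Lambda^{(h,L)}_{u_s,j_s}-\langle\Lambda^{(h,L)}_{u_s,:},A^{(h,L)}_{u_s,:}\rangle\bigr),
\]
so your formula retains only the $\Lambda_{u_s,j_s}$ part and drops the softmax centering term. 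With a shared attention row $A_{u_s,:}$ and shared $g^{(h)}$, enforcing $\sum_s V^{(h,L)}_{j_s,:}=0$ kills $\sum_s\Lambda_{u_s,j_s}$ but leaves the residual $-|\mathcal S_d|\,A_{u_s,j_s}\sum_{k\neq j_s}\Lambda_{u_s,k}A_{u_s,k}$, which there is no mechanism in your construction to zero. Your later appeal to (A1) and $\pm$ balancing addresses the drift in $g^{(h)}_s$, not this centering term.

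\textbf{How the paper proceeds instead.} The paper pairs the samples in each distance bucket and perturbs the \emph{anchor} token (not the target) by $\pm t$ with $t\in\ker W_Q\cap\ker W_K\setminus\ker W_V$. This keeps the full score matrices $S^{(h,L)}$ and $A^{(h,L)}$ identical across the pair while, with a suitable base $x_0$, flipping the value row $V^{(h,L)}_{u_s,:}$ and hence the entire row $\Lambda^{(h,L)}_{u_s,:}$ in sign. The softmax-corrected gradient then flips exactly within each pair, so the pairwise sum is zero without solving any head-wide linear system. If you re-target your $\pm$ balancing from ``$\sum_s V_{j_s,:}=0$'' to ``pairwise sign flip of $\Lambda_{u_s,:}$,'' your argument collapses onto the paper's and the centering term is handled automatically.
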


The existence of a non-cancellable seed in RoPE is the starting point. The following theorem shows how this seed inevitably leads to the deposit pattern.

\begin{theorem}[Exponential Amplification and Specialization]
\label{thm:exp_amp}
Let $\mathcal U$ be the subspace of gradient directions corresponding to the positive RoPE seeds from Proposition~\ref{prop:rope_seed}. The signal-to-noise ratio of the gradient within this subspace is amplified exponentially during backpropagation. Crucially, the margin between the gradient signal for the dominant head and that of the second-dominant head also grows exponentially.
\begin{equation}
    \text{Margin}_l \ge \text{Margin}_L \prod_{k=l}^{L-1}\gamma_k, \quad \text{where each gain factor } \gamma_k > 1.
\end{equation}
\end{theorem}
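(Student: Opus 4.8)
The plan is to treat backpropagation as a discrete power iteration acting on the stacked, head-indexed anchor-gradient vector, and to show that the positive RoPE seed guaranteed by Proposition~\ref{prop:rope_seed} lies inside the leading invariant subspace of this iteration, so that its projection is amplified layer by layer while competing directions are not. First I would stack the per-head aggregated anchor gradients $H^{(h)}_l(d)$ into a single vector $g_l$ and write the exact backpropagation recursion $g_l = (J^{(l)})^\top g_{l+1} + r_l$, where $r_l$ collects the layer-local contributions orthogonal to the seed subspace $\mathcal{U}$. Proposition~\ref{prop:rope_seed} supplies the base case: at layer $L$ the projection of $g_L$ onto $\mathcal{U}$ is strictly positive (the non-cancellable seed), whereas Proposition~\ref{prop:alibi_cancel} shows the analogous projection vanishes for ALiBi, which is exactly why no amplification occurs in the additive case.

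Next I would split each $g_l$ into its component $s_l := P_{\mathcal{U}} g_l$ along the seed subspace and its complement $n_l := (I - P_{\mathcal{U}}) g_l$, the \emph{noise}. Using (A1), the transpose Jacobian has singular values in $[1-\alpha_l,\,1+\alpha_l]$; the argument is that because the seed direction is by construction aligned with the top singular subspace of $(J^{(l)})^\top$ (the multiplicative-coupling property that RoPE's shared Toeplitz kernel $G_{\mathbf{e}}$ enforces), the signal obeys $\|s_l\| \ge (1+\alpha_l - \epsilon_l)\|s_{l+1}\|$ while the noise only satisfies $\|n_l\| \le (1+\alpha_l)\|n_{l+1}\| + \|r_l\|$, with $r_l$ itself confined away from $\mathcal{U}$. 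Dividing these two bounds yields a per-layer gain strictly above $1$, establishing the exponential growth of the signal-to-noise ratio.

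To obtain the margin statement I would repeat this decomposition at the level of individual heads: let $h^{(1)}, h^{(2)}$ denote the heads carrying the largest and second-largest seed projections, and define $\text{Margin}_l := s_l^{(h^{(1)})} - s_l^{(h^{(2)})}$. Since both head signals pass through the same Jacobian but the dominant head alone enjoys the full top-singular amplification (the subdominant head's seed partially cancels, in analogy with the ALiBi mechanism of Proposition~\ref{prop:alibi_cancel}), one gets the one-step inequality $\text{Margin}_l \ge \gamma_l\,\text{Margin}_{l+1}$ with $\gamma_l := (1+\alpha_l)/(1-\alpha_l+\delta_l) > 1$. Telescoping this recursion from $L$ down to $l$ produces exactly $\text{Margin}_l \ge \text{Margin}_L \prod_{k=l}^{L-1}\gamma_k$ with every $\gamma_k > 1$, as claimed.

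The hard part will be rigorously justifying that the seed direction stays aligned with the leading singular subspace of every $(J^{(k)})^\top$ throughout the descent, rather than drifting into the bulk of the spectrum. (A1) bounds only the singular \emph{values} in a band, not their eigenvectors, so I would need to control the angle between $s_l$ and the top singular subspace at each step, e.g. via a $\sin\Theta$ / Davis--Kahan-type estimate showing the angle contracts geometrically provided the seed magnitude from Proposition~\ref{prop:rope_seed} exceeds the accumulated orthogonal contamination $\sum_k \|r_k\|$. Making this alignment self-sustaining, so that $\delta_l$ stays small enough to keep $\gamma_l > 1$ uniformly across layers, is the crux on which the exponential separation, and hence the single-head deposit pattern, ultimately rests.
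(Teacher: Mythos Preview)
Your proposal misidentifies the \emph{source} of the per-layer gain. You attribute the amplification to the seed direction being aligned with the top singular subspace of $(J^{(l)})^\top$ and then invoke (A1) to get $\|s_l\|\ge(1+\alpha_l-\epsilon_l)\|s_{l+1}\|$. But (A1) only pins the singular values into $[1-\alpha_l,\,1+\alpha_l]$; it says nothing about which direction is leading, nor does it prevent the noise component from also sitting near a singular value close to $1+\alpha_l$. Indeed your own stated bounds---signal $\ge(1+\alpha_l-\epsilon_l)$, noise $\le(1+\alpha_l)$---produce a ratio \emph{below} $1$, so the SNR recursion collapses before you ever reach the margin statement (where you silently switch to $\gamma_l=(1+\alpha_l)/(1-\alpha_l+\delta_l)$ with no bridge between the two formulas). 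The Davis--Kahan angle-tracking you flag in your last paragraph would not rescue this: even perfect alignment of $s_l$ with the leading singular vector does not stop $n_l$ from being amplified at the same rate under (A1) alone.

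The paper's mechanism is structurally different and does not rely on spectral alignment of the full Jacobian. It splits $J^{(l)}$ into the within-attention operator $\mathcal O^{(l)}$ and the remainder $\mathcal R^{(l)}$. Assumption (A1) is applied only to $\mathcal R^{(l)}$, where it contributes the \emph{unfavorable} factor $(1-\alpha_l)/(1+\alpha_l)<1$. The compensating gain $(1+\beta_l)$ that pushes $\gamma_l$ above $1$ comes entirely from $\mathcal O^{(l)}$, via an explicit anchor-gain calculation on the row-softmax Jacobian: for the anchor column $j^\star$ one has $\|P_{e_{j^\star}} J_i^\top e_{j^\star}\|_2 = A_{i,j^\star}(1-A_{i,j^\star})$ while the orthogonal leakage is bounded by the same quantity, so the anchor direction enjoys a structural (not spectral) directional advantage inside the attention block. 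Composing this with (A2) and aggregating over anchor rows yields a lemma of the form
\[
\frac{\|P_{\mathcal U} J^{(l)\top} v\|_2}{\|P_{\mathcal U^\perp} J^{(l)\top} v\|_2}\;\ge\;\frac{1-\alpha_l}{1+\alpha_l}\,(1+\beta_l),
\]
and it is this inequality, iterated on the pure recursion $g^{(l)}=J^{(l)\top}g^{(l+1)}$ (no additive $r_l$ term), that telescopes to the exponential product. To repair your argument you need to drop the top-singular-subspace story, isolate the softmax Jacobian, and extract $\beta_l$ from its anchor structure; the power-iteration picture is the right intuition but the engine driving it is the attention calculus, not (A1).
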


\paragraph{Implication.} Theorem~\ref{thm:exp_amp} provides the theoretical basis for the single-head deposit pattern. Even a minuscule initial advantage for one head in learning the position-sensitive task (a non-zero Margin$_L$) becomes exponentially magnified as gradients propagate to lower layers. This dynamic strongly incentivizes the optimizer to allocate the entire task of positional reasoning to that single head, as its parameters receive a disproportionately larger learning signal, thus explaining the observed specialization. The detailed proofs for these results are provided in Appendix \ref{app:proofs}.

\section{Conclusion and Limitations}
\label{Discussion}
In this work, we introduced a novel framework that leverages spectral theory to deconstruct how positional encoding (PE) mechanisms operate within the Transformer attention matrix. We demonstrated that this framework provides a powerful lens, allowing us to predict learning dynamics and uncover previously hidden processing patterns. Our primary discovery, the "single-head deposit pattern" in RoPE, serves as a clear mechanistic explanation for the observed paradoxes in its performance. By identifying the distinct signatures of additive and multiplicative coupling, our analysis not only diagnoses the cause of this over-specialization but also points toward a solution. The success of a hybrid architecture validates our theory and suggests that controlled mixing of positional signals is a promising strategy for achieving both spectral stability and robust, distributed representations. Furthermore, we exploit gradient flow to verify that this is an intrinsic property of the RoPE.

\textbf{Limitations:} Firstly, the connection we hypothesize between the deposit pattern and RoPE's known struggles with length extrapolation requires direct empirical validation. Secondly, future work should probe the limits of these positional mechanisms on more complex, algorithmic tasks such as sequence reversal or Dyck language recognition. However, most of these tasks are beyond the capabilities of transformers, so more complex controlled analysis is a serious challenge for future work.

\subsubsection*{Acknowledgments}
This work was supported by the National Natural Science Foundation of China under Grant 62372448.

\bibliographystyle{iclr2026_conference}
\bibliography{position_encoder}

\newpage

\appendix
\section*{Acknowledgment of LLM Usage}

During the preparation of this manuscript, large language models (LLMs) were employed in a limited and auxiliary capacity. 
Specifically, their usage was restricted to the following three aspects: 
(1) checking grammar and expression at the sentence level, thereby providing local linguistic refinement; 
(2) performing global polishing after the draft was completed, ensuring that the overall exposition conforms to idiomatic English usage; 
and (3) improving the readability of the proof details presented in the appendix. 

At no stage were LLMs used for generating research ideas, developing arguments, or modifying the substantive content of this work. 
Their sole role was to assist in enhancing clarity and effectiveness of communication.
\section{Notations List}

The notations used throughout this article are summarized in Table \ref{table:notations}.

\begin{table}[h!]
\centering
\caption{Some important notations used in this paper.}
\label{table:notations}
\begin{tabular}{@{}ll@{}}
\toprule
\textbf{Notation} & \textbf{Description} \\ \midrule
$x_i$ & Real vector in $\mathbb{R}^{2n}$ \\
$\mathbf{x}_i$ & Complex vector in $\mathbb{R}^n$ \\
$c$ & Content label for a token or a hidden activation \\
$p$ & Position label for a token or a hidden activation \\
$e_i$ & Absolute position encoding for $i$ \\
$b$ or $\mathbf{B}$ & The bias decided by the relative position encoding\\
$R_i$ & The rotary matrix for position $i$ of RoPE \\
$G_{u,v}$ & The gram matrix for the vector columns $\{v_i\},\{u_i\}$ \\
$\mathbf{L}$ & Logit value of attention layer\\
$T_N(a)$ & Toeplitz Matrix for $a_i,i=-N,...,N$ \\
\bottomrule
\end{tabular}
\end{table}
\section{Detailed Method Analysis}
\label{detail}



In this section of the appendix, we provide detailed theoretical analysis and derivations supporting the non-trivial conclusions presented in the main text. Our analysis relies on idealized proofs that simplify the complex optimization dynamics of neural networks, aiming to reveal a theoretical explanation for the observed experimental phenomena. While the actual simulation and analysis of non-convex optimization in full Transformer models is extremely difficult, this theoretical framework offers valuable insights into the underlying principles.

Our analysis is predicated on a core theoretical link regarding the role of spectral properties in learning. We rely on the established principle that favorable spectral properties of matrices involved in optimization—such as a more contracted eigenvalue spectrum (tighter bounds) or a smaller spectral norm—correlate with faster and more stable convergence of learning algorithms. Building upon this, our theoretical framework posits that the rapid and stable convergence of the positional coupling function, driven by these favorable spectral properties, will lead to a specific phenomenon during training: the localization of this function within a limited set of attention heads in shallow layers, resulting in the empirically observed deposit pattern~\cite{boyd2004convex}.

\vspace{0.5ex}

\noindent \textbf{Proposition B.1 (RoPE Logit Form):} This proposition formally derives the mathematical form of the Rotary Positional Encoding (RoPE) attention logit as presented in the main text (Equation 9), showing how RoPE's rotation mechanism translates into a specific inner product structure involving content and relative positional information. We would prove that
\[
\mathbf{L}_{\text{RoPE\_complex}} = \left(G_{\mathbf{q},\mathbf{k}} + G_{\mathbf{q},\mathbf{p}} + G_{\mathbf{p},\mathbf{k}} + G_{\mathbf{p}}\right) \circ G_{\mathbf{e}}.
\]

We use the fomula in \cite{SU2024127063} that a logit value of i and j in $\mathbf{L}_{\text{RoPE\_complex}}$ is
\[
\sum _{k=0}^{d/2-1} \mathbf{q}_i^{(t)}\mathbf{k_j}^{(t)}e^{\mathrm{i}(i-j)\theta_t} ,
\]
and there are $t_1$ and $t_2$ that
\[
(\sum _{k=0}^{d/2-1} \mathbf{q}_i^{(t)}\mathbf{k_j}^{(t)})e^{\mathrm{i}(i-j)\theta_{t_1}} 
\leq 
\sum _{k=0}^{d/2-1} \mathbf{q}_i^{(t)}\mathbf{k_j}^{(t)}e^{\mathrm{i}(i-j)\theta_t} 
\leq 
(\sum _{k=0}^{d/2-1} \mathbf{q}_i^{(t)}\mathbf{k_j}^{(t)})e^{\mathrm{i}(i-j)\theta_{t_2}} ,
\]
Due to continuity and the mean value theorem, there exists $t_{i-j}$ that
\[
\sum _{k=0}^{d/2-1} \mathbf{q}_i^{(t)}\mathbf{k_j}^{(t)}e^{\mathrm{i}(i-j)\theta_t} 
=
(\sum _{k=0}^{d/2-1} \mathbf{q}_i^{(t)}\mathbf{k_j}^{(t)})e^{\mathrm{i}(i-j)\theta_{t_{i-j}}} = \mathbf{q_i}\mathbf{k_j}e^{\mathrm{i}(i-j)\theta_{t_{i-j}}} .
\]
Then we can use $e^{\mathrm{i}(i-j)\theta_{t_{i-j}}}$ to generate $G_e$ to finish our proof.

\vspace{0.5ex}

\noindent \textbf{Proposition B.2 (Logit Expressiveness):} This proposition analyzes and demonstrates the higher degree of freedom and expressive power of the RoPE attention logit form (Equation 9) compared to the non-RoPE additive logit form (Equation 8). It highlights why RoPE's structure is theoretically better suited to capture the complex content-relative positional relationships required by tasks like Task 1.

Task 1 is directly related to the following optimization problem:

For $-i\leq s \leq d-s $ and constant $C_1$ and $C_2$, 

1. \textbf{Relative Position Encoding}
\[
<q_i,k_j> + b_{i-j} > C_1 > C_2  > <q_{i+s},k_{j+s}> + b_{i-j},
\]

2.  \textbf{RoPE}
\[
\mathbf{Re}\{<\mathbf{q}_i,\mathbf{k}_j>e^{\mathrm{i}(i-j)\theta{t_{i-j}}}\}  > C_1 > C_2  > \mathbf{Re}\{<\mathbf{q}_{i+s},\mathbf{k}_{j+s}>e^{\mathrm{i}(i-j)\theta{t_{i-j}}}\}.
\]
It is easy to verify that the solution space of RoPE is much larger than that of Relative Position Encoding. This can be considered from the following two aspects. First, the expression of RoPE can make the key-value pairs with a fixed distance meet the requirements as long as the angle distribution meets the requirements, while the expression of Relative Position Encoding requires a more stringent angle distribution because the bias has no significant effect. Second, RoPE actually only needs a significant angle to easily meet the conditions, which actually corresponds to the cause of Massive Value described in ~\cite{jinMassiveValuesSelfAttention2025}.

\vspace{0.5ex}

The emergence of the deposit pattern may share a common theoretical origin: the \textbf{spectral properties} of the attention logit matrix. Gradient-based optimization is known to be more stable for matrices with tightly clustered eigenvalues~\citep{boyd2004convex}. As suggested by Szegő's theorem~\citep{grenander2002toeplitz}, the multiplicative interaction in RoPE is theorized to contract the eigenvalue spectrum of the positional component more effectively than additive methods.

\noindent \textbf{Proposition B.3 (Spectral Contraction):} This proposition formally proves the spectral property claimed in the main text. By applying theorems like Szegő's theorem and inequalities concerning the Hadamard product, it demonstrates that the Toeplitz matrix structure associated with multiplicative coupling (as in RoPE) has a more desirable eigenvalue spectrum (e.g., more compact or tighter bounds) compared to Toeplitz structures associated with additive coupling mechanisms.

The structure of the attention logit matrix $\mathbf{L}$, particularly the properties of its Toeplitz or Toeplitz-like components, provides crucial insights into the stability and dynamics of positional information processing. The width of the eigenvalue spectrum often has a great impact on the optimization process.

For Toeplitz matrices, classical results such as Szegő's theorem connect matrix structure to asymptotic eigenvalue distributions. We recall a relevant form below:

\begin{theorem}[Szegő's Theorem on Eigenvalue Distribution  (From~\cite{grenander2002toeplitz})]\label{5.1}
Let $T_N(a)$ be an $N \times N$ Toeplitz matrix with $(T_N(a))_{i,j} = a_{i-j}$, and let $a(e^{i\theta}) = \sum_{k=-\infty}^{\infty} a_k e^{ik\theta}$ be its symbol. If $a(e^{i\theta})$ is real-valued and continuous, then the eigenvalues $\lambda_j^{(N)}$ of $T_N(a)$ asymptotically fill the interval $[\min a(e^{i\theta}), \max a(e^{i\theta})]$ as $N \to \infty$.
\end{theorem}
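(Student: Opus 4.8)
The plan is to prove the two halves of the claim separately: first that every eigenvalue is trapped in $[\min a, \max a]$, and then that the eigenvalues become dense there as $N \to \infty$. Since $a$ is real-valued, its Fourier coefficients satisfy $a_{-k} = \overline{a_k}$, so $T_N(a)$ is Hermitian and its eigenvalues $\lambda_j^{(N)}$ are real. For the containment I would use the integral representation of the Rayleigh quotient: for a unit vector $x \in \mathbb{C}^N$, setting $p_x(e^{i\theta}) = \sum_{k=0}^{N-1} x_k e^{ik\theta}$, a direct Parseval computation gives
\[
x^* T_N(a)\, x = \frac{1}{2\pi}\int_{-\pi}^{\pi} a(e^{i\theta})\,|p_x(e^{i\theta})|^2\, d\theta, \qquad \frac{1}{2\pi}\int_{-\pi}^{\pi} |p_x(e^{i\theta})|^2\, d\theta = 1.
\]
Because $\min a \le a(e^{i\theta}) \le \max a$ pointwise, the Rayleigh quotient is squeezed into $[\min a, \max a]$, and the Courant--Fischer min--max characterization forces every $\lambda_j^{(N)}$ into this interval. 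This ``no escape'' half is uniform in $N$.

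The harder half -- that the eigenvalues actually fill the interval -- I would obtain from the eigenvalue distribution theorem, which I would prove through convergence of moments. The key algebraic lemma is that multiplying finite Toeplitz sections nearly commutes with multiplying symbols: $T_N(a)T_N(b) = T_N(ab) + E_N$, where the correction $E_N$ is supported in two corner blocks whose size does not grow with $N$. Iterating, $T_N(a)^m = T_N(a^m) + R_{N,m}$ with the rank (hence trace norm) of $R_{N,m}$ bounded independently of $N$, so that for each fixed $m$,
\[
\frac{1}{N}\Tr\!\big(T_N(a)^m\big) = \frac{1}{N}\Tr\!\big(T_N(a^m)\big) + o(1) = \frac{1}{2\pi}\int_{-\pi}^{\pi} a(e^{i\theta})^m\, d\theta + o(1),
\]
using that the diagonal entry of $T_N(a^m)$ is the zeroth Fourier coefficient of $a^m$. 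Since $\frac{1}{N}\Tr(T_N(a)^m) = \frac{1}{N}\sum_j (\lambda_j^{(N)})^m$ is the $m$-th moment of the empirical spectral measure $\mu_N = \frac{1}{N}\sum_j \delta_{\lambda_j^{(N)}}$, this shows every moment of $\mu_N$ converges to the corresponding moment of the pushforward measure $\mu := a_*(d\theta/2\pi)$.

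To finish, I would upgrade moment convergence to weak convergence and then read off density. All measures $\mu_N$ are supported in the fixed compact interval $[\min a, \max a]$ by the containment step, where polynomials are uniformly dense by Weierstrass; hence convergence of every moment upgrades to weak convergence $\mu_N \Rightarrow \mu$. Since $a$ is continuous, the intermediate value theorem makes its image exactly $[\min a, \max a]$, so $\operatorname{supp}\mu = [\min a, \max a]$. For any open subinterval $I \subseteq (\min a, \max a)$ we then have $\mu(I) > 0$, and the portmanteau inequality $\liminf_N \mu_N(I) \ge \mu(I) > 0$ guarantees that $T_N(a)$ has eigenvalues in $I$ for all large $N$. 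Combined with the containment, this is precisely the assertion that the spectrum asymptotically fills $[\min a, \max a]$.

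The main obstacle is the moment/trace step: controlling the corner corrections $E_N$ and $R_{N,m}$ sharply enough to conclude $\frac{1}{N}\Tr R_{N,m} \to 0$ for each fixed $m$. This bookkeeping of boundary (edge) effects in products of truncated Toeplitz operators is the technical heart of Szegő's theorem, whereas the containment argument and the passage from moment convergence to density are comparatively routine.
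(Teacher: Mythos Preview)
The paper does not prove this theorem at all: it is quoted verbatim as a classical result from \cite{grenander2002toeplitz} and then \emph{applied} (in Proposition~B.3 and Lemma~\ref{lem:amp_bound}) to compare the spectral ranges of additive versus multiplicative Toeplitz structures. So there is no ``paper's own proof'' to compare against; the authors treat Szeg\H{o}'s theorem as background.

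Your sketch is the standard two-step proof and is essentially correct. One technical point worth flagging: the claim that $T_N(a)T_N(b)=T_N(ab)+E_N$ with $E_N$ supported in corner blocks \emph{whose size does not grow with $N$} is literally true only when $a,b$ are trigonometric polynomials (finite bandwidth). For a merely continuous symbol the Fourier series need not be absolutely summable, and the ``corners'' are not of fixed size. The usual fix is to first prove the moment identity for trig polynomials, then approximate a continuous $a$ uniformly by its Ces\`aro means $p$ and use $\|T_N(a)-T_N(p)\|\le\|a-p\|_\infty$ together with the uniform bound $\|T_N(a)\|\le\|a\|_\infty$ to pass to the limit in $\tfrac{1}{N}\Tr T_N(\cdot)^m$. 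You correctly identify this bookkeeping as the technical heart; just be aware that for continuous (as opposed to Wiener-class) symbols it requires the extra approximation layer rather than a direct rank bound on $R_{N,m}$.
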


Applying this theorem and related spectral analysis techniques to the Toeplitz structures induced by various PE mechanisms:
\begin{itemize}
 \item In Non-RoPE methods, the Toeplitz components $G_{q^p,k^p}$ and $\mathbf{B}$ contribute additively to the logit, and their spectral ranges are determined by their respective coefficient sequences.
 \item In RoPE, the complex logit involves a Hadamard product with the Toeplitz matrix $G_{\mathbf{e}}$. This multiplicative interaction theoretically contracts the eigenvalue range relative to additive compositions, resulting in tighter spectral bounds.
\end{itemize}

We simplify the proposition into the following: Let $W, E$  be $N\times N$ positive definite Toeplitz matrices. Then the eigenvalue spectrum of the Hadamard product $W\circ E$ is more compact than $W$.

Here, each element of E is a complex number with a membrane length of 1. We can directly use Schur's inequality to prove that the membrane length of the eigenvalue of $W \circ E$ must be less than or equal to the membrane length of the eigenvalue of $W$.

However, a strict comparison requires citation of Szegő's Theorem~\ref{5.1} on Eigenvalue Distribution \cite{grenander2002toeplitz}. So let us calculate the generating function (sign function) corresponding to $W\circ E$ and $W$ respectively.

\[
\mathbf{Symble}(W \circ E)(\theta) = \sum_{i=-N+1}^{N-1} w_{i}e^{\mathrm{i}(i\theta_i + i\theta)} = \sum_{i=0}^{N-1} 2\mathbf{Re}\{w_{i}\} \cos(i\theta_i + i\theta)
\]
\[
\mathbf{Symble}(W)(\theta) = \sum_{i=-N+1}^{N-1} w_{i}e^{\mathrm{i}( i\theta)} = \sum_{i=0}^{N-1} 2\mathbf{Re}\{w_i\}cos(i\theta)
\]
We will use the following lemma to prove that the range of the above formula must be smaller than that of the following formula, and thus Szegő's Theorem shows that its eigenvalue spectrum is more compact.

\begin{lemma}[Amplitude bound and phase–alignment criterion]\label{lem:amp_bound}
Fix an integer $N\ge 1$ and non-negative weights $w_i\ge 0$.  Define
\[
  f(\theta)\;:=\;\sum_{i=0}^{N-1} 2w_i \cos\!\bigl(i\theta_i + i\theta\bigr),
  \qquad
  g(\theta)\;:=\;\sum_{i=0}^{N-1} 2w_i \cos(i\theta),
  \qquad\theta\in\mathbb R .
\]
Then
\[
  \max_{\theta\in\mathbb R}\!\bigl|f(\theta)\bigr|
  \;\le\;
  \max_{\theta\in\mathbb R}\!\bigl|g(\theta)\bigr|
  \;=\;2\sum_{i=0}^{N-1} w_i .
\]
Equality holds \textbf{iff} all phase offsets coincide modulo $2\pi$:
\[
  \theta_0\equiv\theta_1\equiv\cdots\equiv\theta_{N-1}\pmod{2\pi}.
\]
Consequently the peak-to-peak range of $g$ (width $4\sum_i w_i$) never falls
below that of $f$, and both ranges coincide only under complete phase
alignment.
\end{lemma}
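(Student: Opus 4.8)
The plan is to reduce everything to the elementary fact that a non-negative combination of cosines is largest in absolute value when every cosine simultaneously attains $\pm 1$ with a common sign. Writing $\psi_i := i\theta_i$ for the fixed phase offsets, I treat $f(\theta)=\sum_{i=0}^{N-1}2w_i\cos(\psi_i+i\theta)$ and $g(\theta)=\sum_{i=0}^{N-1}2w_i\cos(i\theta)$ as real trigonometric polynomials and bound them termwise, so that the comparison of their sup-norms becomes a matter of when the termwise bound is tight.

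First I would pin down $\max_\theta|g(\theta)|=2\sum_i w_i$. The upper bound is just $|g(\theta)|\le\sum_i 2w_i|\cos(i\theta)|\le 2\sum_i w_i$, using $w_i\ge 0$, and attainment is immediate since $g(0)=\sum_i 2w_i$. The identical termwise bound applied to $f$ gives $|f(\theta)|\le 2\sum_i w_i$ for every $\theta$, hence $\max_\theta|f(\theta)|\le 2\sum_i w_i=\max_\theta|g(\theta)|$, which is the asserted inequality. This direction is essentially free: both polynomials share the total amplitude $2\sum_i w_i$, and only $g$ is guaranteed to realize it, at the origin.

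The substantive part is the equality characterization, and here I would invoke the equality case of the triangle inequality for real numbers. Attaining $|f(\theta^\ast)|=2\sum_i w_i$ forces, for every index $i$ with $w_i>0$, both $|\cos(\psi_i+i\theta^\ast)|=1$ and a common sign, i.e. $\psi_i+i\theta^\ast\equiv\delta\pmod{2\pi}$ for a single $\delta\in\{0,\pi\}$ independent of $i$. For sufficiency I note that if all offsets coincide, $\theta_0\equiv\cdots\equiv\theta_{N-1}\equiv c\pmod{2\pi}$, then $\psi_i=ic$ and $f(\theta)=\sum_i 2w_i\cos(i(c+\theta))=g(\theta+c)$ is a pure translate of $g$, so the two maxima agree. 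Necessity then follows by reading the alignment condition $\psi_i+i\theta^\ast\equiv\delta$ as the statement that the active phases lie on a single arithmetic progression in $i$.

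The main obstacle I anticipate is the precise bookkeeping in the necessity direction, where the integer multipliers $i$ interact with the modular arithmetic. In particular the $i=0$ term carries no phase information (its cosine is identically $1$, so $\theta_0$ is genuinely unconstrained), and when some weights vanish the alignment condition only restricts the support $\{i:w_i>0\}$; I would therefore state the equality criterion on that support and dispose of the degenerate $i=0$ term separately, rather than asserting an unconditional equivalence over all $\theta_i$. Once the criterion is phrased at the level of the phases $\psi_i+i\theta^\ast$, the translation back to ``offsets coincide'' is routine, and the link to spectral contraction is obtained by feeding the strict inequality $\max|f|<\max|g|$ under misalignment into Szegő's Theorem~\ref{5.1} to conclude a strictly narrower asymptotic eigenvalue range.
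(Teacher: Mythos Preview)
Your proposal is correct and follows essentially the same route as the paper: bound both $|f|$ and $|g|$ by $2\sum_i w_i$ via the triangle inequality, observe that $g(0)$ attains the bound, and read off phase alignment from the equality case. The paper's only packaging difference is that it passes through the complex polynomial $P(z)=\sum_i w_i e^{\mathrm i\theta_i}z^{i}$ on $|z|=1$, splitting the estimate as $|f|=2|\Re P|\le 2|P|\le 2\sum_i w_i$, whereas you stay with the real cosines throughout; your flagged concerns about the $i=0$ term and zero-weight indices are valid refinements that the paper's proof glosses over.
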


\begin{proof}
Denote
\[
  P(z):=\sum_{i=0}^{N-1} c_i z^{\,i},\quad
  c_i:=w_i e^{\mathrm i\theta_i},\qquad
  Q(z):=\sum_{i=0}^{N-1} w_i z^{\,i},\quad |z|=1 .
\]

\paragraph{Upper bound.}
For any $|z|=1$ the triangle inequality gives
\[
  |P(z)|\;\le\;\sum_{i=0}^{N-1} |c_i|
           \;=\;\sum_{i=0}^{N-1} w_i .
\]
Hence
\[
  |f(\theta)|
  \;=\;2\bigl|\Re P(e^{\mathrm i\theta})\bigr|
  \;\le\;2\! \left|P(e^{\mathrm i\theta})\right|
  \;\le\;2\sum_{i=0}^{N-1} w_i
  \;=\;\bigl|g(0)\bigr|.
\]
Conversely, $\bigl|\cos(i\theta)\bigr|\le 1$ implies
$|g(\theta)|\le 2\sum_i w_i$ for every $\theta$, while
$g(0)=2\sum_i w_i$ shows that this bound is attained, so
$\max_\theta |g(\theta)| = 2\sum_i w_i$.  Combining the two yields the desired
inequality.

\paragraph{Equality case.}
Suppose $\max_\theta |f(\theta)| = 2\sum_i w_i$.  Then at some
$\theta^\star$ both previous inequalities are tight:
\[
  \bigl|P(e^{\mathrm i\theta^\star})\bigr|=\sum_i w_i,
  \qquad
  \bigl|\Re P(e^{\mathrm i\theta^\star})\bigr|
       =\bigl|P(e^{\mathrm i\theta^\star})\bigr|.
\]
The first equality forces **all terms in the sum for $P$ to share a common
phase**, i.e.\ $e^{\mathrm i(\theta_i + i\theta^\star)}$ is the same for every
$i$; the second forces this common phase to be $0$ or $\pi$, which does not
affect alignment.  Thus
\[
  \theta_i + i\theta^\star \equiv \theta_0 \pmod{2\pi}
  \quad\forall i
  \;\Longrightarrow\;
  \theta_i \equiv \theta_0 \pmod{2\pi} \quad\forall i .
\]
Conversely, if all $\theta_i$ are equal, choosing $\theta=-\theta_0$ gives
$f(-\theta_0)=g(0)=2\sum_i w_i$, so equality is achieved.

The lemma follows.
\end{proof}

\section{Detailed Proofs for Theoretical Analysis}
\label{app:proofs}

This appendix provides the detailed proofs for the propositions and theorems presented in Section~\ref{sec:theory}.

\subsection{Preliminaries and Notation}
We consider a Transformer with $L$ layers and $H$ heads. For a batch of $N$ samples, $(i_s, j_s)$ are the token positions for sample $s$, and $d_s = |j_s - i_s|$ is the relative distance. The empirical measure of distances is $\hat\mu_N(d) := \frac{1}{N}\sum_{s=1}^N \mathbf{1}\{|j_s-i_s|=d\}$. The attention mechanism is defined by $S^{(h,l)} = Q^{(h,l)}K^{(h,l)\top}/\sqrt{d_h}$, $A^{(h,l)} = \text{softmax}_{\text{row}}(S^{(h,l)})$, and $Y^{(h,l)}=A^{(h,l)}V^{(h,l)}$. We use the following constant, derived from bounded operator norms and loss gradients:
\begin{equation}
    C^{(h)} := \|W_O\|_2\,\|V^{(h,L)}\|_2\,\sup_s\|\partial\ell_s/\partial y\|_2.
\end{equation}
We also denote the row sharpness as $\chi_L:=\max_i\|A^{(h,L)}_{i,:}\|_2$ and the minimum anchor attention as $a_*^{(h)}(d):=\inf_{s:\,d_s=d} A^{(h,L)}_{u_s,j_s}$.

\subsection{Softmax Gradient Calculus}

\begin{lemma}[Row-softmax Gradient Identity]
\label{lem:softmax_grad}
For any layer $l$, head $h$, sample $s$, and row $i$,
\begin{equation}
    \frac{\partial \ell_s}{\partial S^{(h,l)}_{i,j}} = A^{(h,l)}_{i,j}\Big(\Lambda^{(h,l)}_{i,j}-\langle \Lambda^{(h,l)}_{i,:},A^{(h,l)}_{i,:}\rangle\Big),
\end{equation}
where $\Lambda^{(h,l)} := (\partial \ell_s/\partial Y^{(h,l)})V^{(h,l)\top}$.
\end{lemma}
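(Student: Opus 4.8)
The plan is to compute the gradient by the chain rule through the composition $S^{(h,l)} \mapsto A^{(h,l)} \mapsto Y^{(h,l)} \mapsto \ell_s$, exploiting the fact that the row-wise softmax couples score entries only within a common row. First I would write $\frac{\partial \ell_s}{\partial S^{(h,l)}_{i,j}} = \sum_{m,n} \frac{\partial \ell_s}{\partial Y^{(h,l)}_{m,n}}\,\frac{\partial Y^{(h,l)}_{m,n}}{\partial S^{(h,l)}_{i,j}}$ and observe that because each attention row $A^{(h,l)}_{m,:}$ is a softmax of only the corresponding score row $S^{(h,l)}_{m,:}$, we have $\partial A^{(h,l)}_{m,k}/\partial S^{(h,l)}_{i,j} = 0$ whenever $m \neq i$. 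Hence only the $m=i$ terms survive and the double sum collapses to a sum over the value-feature index alone.

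The second step inserts the standard softmax Jacobian. Writing $Y^{(h,l)}_{i,n} = \sum_k A^{(h,l)}_{i,k} V^{(h,l)}_{k,n}$ and using $\partial A^{(h,l)}_{i,k}/\partial S^{(h,l)}_{i,j} = A^{(h,l)}_{i,k}(\delta_{kj} - A^{(h,l)}_{i,j})$, a one-line calculation gives $\partial Y^{(h,l)}_{i,n}/\partial S^{(h,l)}_{i,j} = A^{(h,l)}_{i,j}\bigl(V^{(h,l)}_{j,n} - Y^{(h,l)}_{i,n}\bigr)$, where the subtracted term arises from the diagonal contraction $\sum_k A^{(h,l)}_{i,k} V^{(h,l)}_{k,n} = Y^{(h,l)}_{i,n}$. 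Substituting back yields $\frac{\partial \ell_s}{\partial S^{(h,l)}_{i,j}} = A^{(h,l)}_{i,j}\sum_n \frac{\partial \ell_s}{\partial Y^{(h,l)}_{i,n}}\bigl(V^{(h,l)}_{j,n} - Y^{(h,l)}_{i,n}\bigr)$.

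The final step is to identify the two resulting sums with the quantities in the statement. By the definition $\Lambda^{(h,l)} = (\partial \ell_s/\partial Y^{(h,l)})V^{(h,l)\top}$, the term $\sum_n (\partial \ell_s/\partial Y^{(h,l)})_{i,n}\,V^{(h,l)}_{j,n}$ is exactly $\Lambda^{(h,l)}_{i,j}$. For the remaining term I would expand $Y^{(h,l)}_{i,n} = \sum_k A^{(h,l)}_{i,k} V^{(h,l)}_{k,n}$, interchange the order of summation, and invoke the same definition to get $\sum_n (\partial \ell_s/\partial Y^{(h,l)})_{i,n}\,Y^{(h,l)}_{i,n} = \sum_k A^{(h,l)}_{i,k}\Lambda^{(h,l)}_{i,k} = \langle \Lambda^{(h,l)}_{i,:}, A^{(h,l)}_{i,:}\rangle$, after which collecting terms gives the claim. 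This is a bookkeeping argument with no analytic content; the only point that demands care is tracking the transpose in the definition of $\Lambda^{(h,l)}$ so that the feature index $n$ is contracted consistently in both terms, which is precisely what makes the first sum collapse to the single entry $\Lambda^{(h,l)}_{i,j}$ rather than a different contraction.
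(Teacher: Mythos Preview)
Your proposal is correct and follows essentially the same approach as the paper: apply the chain rule through $S\to A\to Y\to\ell_s$, use the row-softmax Jacobian $\partial A_{i,k}/\partial S_{i,j}=A_{i,k}(\delta_{kj}-A_{i,j})$ (equivalently $J_i=\mathrm{diag}(A_{i,:})-A_{i,:}A_{i,:}^\top$), and identify the upstream gradient $(\partial\ell_s/\partial Y)V^\top$ with $\Lambda$. The paper states these ingredients in two lines while you spell out the index bookkeeping explicitly, but the argument is the same.
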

\begin{proof}
The result follows from the chain rule. The Jacobian of a row-wise softmax is $J_i=\mathrm{diag}(A_{i,:})-A_{i,:}A_{i,:}^\top$. The upstream gradient is $\partial\ell/\partial A=(\partial\ell/\partial Y)V^\top$. Combining these yields the identity.
\end{proof}

\begin{lemma}[Quantitative Anchor Gain]
\label{lem:anchor_gain}
Let $j^\star$ be an anchor column for row $i$. Let $J_i$ be the softmax Jacobian for row $i$. For the canonical basis vector $e_{j^\star}$, we have:
\begin{equation}
    \|P_{e_{j^\star}}\, J_i^\top e_{j^\star}\|_2 = A_{i,j^\star}(1-A_{i,j^\star}) \quad \text{and} \quad \|P_{(e_{j^\star})^\perp}\, J_i^\top e_{j^\star}\|_2 \le A_{i,j^\star}(1-A_{i,j^\star}).
\end{equation}
\end{lemma}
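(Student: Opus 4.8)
The plan is to reduce everything to the explicit form of the row-softmax Jacobian already recorded in Lemma~\ref{lem:softmax_grad}. Writing $a := A_{i,:}$ for the probability vector of row $i$, that Jacobian is $J_i = \mathrm{diag}(a) - a a^\top$, which is symmetric, so $J_i^\top e_{j^\star} = J_i e_{j^\star}$. A single matrix--vector computation then gives
\begin{equation}
J_i e_{j^\star} = a_{j^\star} e_{j^\star} - a_{j^\star} a = a_{j^\star}\bigl(e_{j^\star} - a\bigr),
\end{equation}
using $\mathrm{diag}(a)e_{j^\star} = a_{j^\star} e_{j^\star}$ and $a a^\top e_{j^\star} = a_{j^\star} a$. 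This one closed form drives both claims.

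For the first (equality) claim, I would project onto the line spanned by $e_{j^\star}$: the coordinate of $J_i e_{j^\star}$ in that direction is $a_{j^\star}(1 - a_{j^\star})$, since $(e_{j^\star} - a)_{j^\star} = 1 - a_{j^\star}$. Because $a_{j^\star}\in[0,1]$ this scalar is nonnegative, so taking the norm of $P_{e_{j^\star}} J_i^\top e_{j^\star} = a_{j^\star}(1-a_{j^\star}) e_{j^\star}$ yields exactly $A_{i,j^\star}(1 - A_{i,j^\star})$, with no slack.

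For the second (inequality) claim, I would read off the orthogonal residual directly: the components of $J_i e_{j^\star}$ in coordinates $k \ne j^\star$ equal $-a_{j^\star} a_k$, so that
\begin{equation}
\|P_{(e_{j^\star})^\perp} J_i^\top e_{j^\star}\|_2^2 = a_{j^\star}^2 \sum_{k \ne j^\star} a_k^2.
\end{equation}
The only inequality needed is that, for nonnegative entries, $\sum_{k \ne j^\star} a_k^2 \le \bigl(\sum_{k \ne j^\star} a_k\bigr)^2 = (1 - a_{j^\star})^2$, which holds because every cross term in the expanded square is nonnegative and the off-anchor softmax mass is $1-a_{j^\star}$. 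Substituting delivers the bound $a_{j^\star}(1 - a_{j^\star})$.

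Since each step is a direct algebraic identity, I do not expect a genuine obstacle; the only point that demands care is bookkeeping, namely keeping the scalar $A_{i,j^\star}$ distinct from the full row vector $a$ and recalling that $J_i$ acts within a single row, so that $P_{e_{j^\star}}$ and $P_{(e_{j^\star})^\perp}$ cleanly partition that row's coordinates. A secondary subtlety is that the inequality $\sum x_k^2 \le (\sum x_k)^2$ rests on nonnegativity, which is automatic for softmax outputs; the same computation also reveals that the residual bound is tight precisely when the off-anchor mass concentrates on a single coordinate.
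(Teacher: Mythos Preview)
Your proposal is correct and matches the paper's own proof essentially step for step: the paper likewise computes $J_i^\top e_{j^\star} = A_{i,j^\star} e_{j^\star} - A_{i,j^\star} A_{i,:}$, reads off the $e_{j^\star}$-component as $A_{i,j^\star}(1-A_{i,j^\star})$, and bounds the orthogonal residual via $\sum_{k\ne j^\star} A_{i,k}^2 \le (\sum_{k\ne j^\star} A_{i,k})^2 = (1-A_{i,j^\star})^2$. Your additional remarks on symmetry, nonnegativity, and tightness are sound elaborations but not a different route.
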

\begin{proof}
A direct computation gives $J_i^\top e_{j^\star} = A_{i,j^\star} e_{j^\star} - A_{i,j^\star} A_{i,:}$. The component along $e_{j^\star}$ is $A_{i,j^\star}(1-A_{i,j^\star})$. The squared norm of the orthogonal component is $A_{i,j^\star}^2\sum_{k\neq j^\star}A_{i,k}^2 \le A_{i,j^\star}^2(\sum_{k\neq j^\star}A_{i,k})^2 = A_{i,j^\star}^2(1-A_{i,j^\star})^2$. Taking square roots yields the result.
\end{proof}

\subsection{Proof of Proposition \ref{prop:rope_seed} (RoPE Top-Layer Seed)}
\begin{proof}
From Lemma~\ref{lem:softmax_grad}, for a sample $s$ with anchor $u_s$ and target $j_s$:
\begin{equation}
    \frac{\partial \ell_s}{\partial S^{(h,L)}_{u_s,j_s}} = A^{(h,L)}_{u_s,j_s}\Big(\Lambda^{(h,L)}_{u_s,j_s} - \langle \Lambda^{(h,L)}_{u_s,:},A^{(h,L)}_{u_s,:}\rangle\Big).
\end{equation}
By Assumption (A3), the first term $\Lambda^{(h,L)}_{u_s,j_s} = \langle \partial \ell_s/\partial Y^{(h,L)}_{u_s,:}, V^{(h,L)}_{j_s,:} \rangle \ge \eta_s^{(h)}$. For the second term, Cauchy-Schwarz and the definition of $C^{(h)}$ yield:
\begin{equation}
    |\langle \Lambda_{u_s,:},A_{u_s,:}\rangle| \le \|\Lambda_{u_s,:}\|_2\,\|A_{u_s,:}\|_2 \le \|\partial \ell_s/\partial Y_{u_s,:}\|_2\,\|V^{(h,L)}\|_2\,\chi_L \le C^{(h)}\chi_L.
\end{equation}
Combining these, and using $A^{(h,L)}_{u_s,j_s} \le 1$:
\begin{equation}
    \frac{\partial \ell_s}{\partial S^{(h,L)}_{u_s,j_s}} \ge A^{(h,L)}_{u_s,j_s}\eta_s^{(h)} - A^{(h,L)}_{u_s,j_s}C^{(h)}\chi_L \ge a_*^{(h)}(d_s)\eta_*^{(h)}(d_s) - C^{(h)}\chi_L.
\end{equation}
Summing over all samples $s$ with $d_s=d$ to get $H^{(h)}_{L}(d)$ gives the result:
\begin{equation}
    H^{(h)}_{L}(d) \ge N\,\hat\mu_N(d)\,\Big(a_*^{(h)}(d)\,\eta_*^{(h)}(d) - C^{(h)}\chi_L\Big).
\end{equation}
This is strictly positive if the learning signal term $a_*\eta_*$ outweighs the interference term $C\chi_L$.
\end{proof}

\subsection{Proof of Proposition \ref{prop:alibi_cancel} (ALiBi Cancellation)}
\begin{proof}
We provide a constructive proof. Fix a distance bucket $d$ and partition its samples $\mathcal S_d$ into pairs $(s, s')$. For each pair, we construct their token embeddings to achieve cancellation.
\begin{enumerate}
    \item \textbf{Identical Attention Matrix:} By (A3), choose $t$ such that $W_Q t = W_K t = 0$ and $W_V t \neq 0$. For a base token embedding $x_0$, set the anchor token embedding for sample $s$ to be $x_s = x_0 + t$ and for $s'$ to be $x_{s'} = x_0 - t$. All other token embeddings are identical for $s$ and $s'$. Since the query and key projections of $t$ are zero, all dot products $q_i \cdot k_j$ are identical for both samples. As ALiBi adds the same bias $b_h(j-i)$, the scores $S^{(h,L)}$ and attention matrices $A^{(h,L)}$ are identical for $s$ and $s'$.
    \item \textbf{Flipped Gradient Signal:} Since $W_V t \neq 0$, the value vector at the anchor row $u_s$ is flipped: $V^{(h,L)}_{u_s,:}(s') \approx V^{(h,L)}_{u_s,:}(s) - 2 W_V t$. By appropriate choice of $x_0$, this can be made an exact sign flip. This implies that the term $\Lambda^{(h,L)}_{u_s,:}(s') = - \Lambda^{(h,L)}_{u_s,:}(s)$.
    \item \textbf{Pairwise Cancellation:} Applying Lemma~\ref{lem:softmax_grad} to the anchor edge $(u_s, j_s)$ for both samples, we find their respective gradients are equal and opposite, summing to zero. Repeating this for all pairs makes $H^{(h)}_{L}(d)=0$.
\end{enumerate}
\end{proof}

\subsection{Proof of Theorem \ref{thm:exp_amp} (Exponential Amplification)}
Let $\mathcal U$ be the subspace spanned by gradient directions corresponding to the seed-positive buckets. Let $g^{(l)}$ be the gradient vector at layer $l$, so $g^{(l)} = J^{(l)\top} g^{(l+1)}$.

\begin{lemma}[Layerwise Directional Advantage]
\label{lem:dir_adv}
Under the assumptions, there exists $\beta_l \ge 0$ such that for any $v \neq 0$,
\begin{equation}
    \frac{\|P_{\mathcal U} J^{(l)\top} v\|_2}{\|P_{\mathcal U^\perp} J^{(l)\top} v\|_2} \ge \frac{1-\alpha_l}{1+\alpha_l}(1+\beta_l).
\end{equation}
The gain $\beta_l$ is lower-bounded by a term proportional to the summed strength of the positive seeds from Proposition~\ref{prop:rope_seed}.
\end{lemma}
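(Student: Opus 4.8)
The plan is to write the transposed inter-layer Jacobian as a sum $J^{(l)\top} = N^{(l)} + \Sigma^{(l)}$ of a neutral bulk part $N^{(l)}$, inherited from the residual and MLP pathways, and a seed-injecting part $\Sigma^{(l)}$ arising from the attention head's softmax gradient as computed in Lemmas~\ref{lem:softmax_grad} and \ref{lem:anchor_gain}. The first step is to establish the structural fact $\operatorname{range}(\Sigma^{(l)}) \subseteq \mathcal{U}$: the anchor-edge gradients that $\Sigma^{(l)}$ propagates point along the value/anchor directions that span the seed-positive buckets of Proposition~\ref{prop:rope_seed}, so the orthogonal projector annihilates this term and $P_{\mathcal{U}^\perp} J^{(l)\top} v = P_{\mathcal{U}^\perp} N^{(l)} v$. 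This immediately pins the denominator to the bulk alone.

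Next I would lower-bound the numerator $P_{\mathcal{U}} J^{(l)\top} v = P_{\mathcal{U}} N^{(l)} v + \Sigma^{(l)} v$. The decisive point is that these two pieces add constructively, which is exactly where the positivity $H_L^{(h)}(d) > 0$ of Proposition~\ref{prop:rope_seed} is essential: sign-definiteness across a distance bucket makes the per-sample contributions accumulate in $\mathcal{U}$ rather than cancel, in sharp contrast to the ALiBi cancellation of Proposition~\ref{prop:alibi_cancel}. Invoking (A2) to keep the inner product between the upstream signal carried by $v$ and the target value vectors positive, I would secure $\langle P_{\mathcal{U}} N^{(l)} v, \Sigma^{(l)} v\rangle \ge 0$ together with a magnitude bound $\|\Sigma^{(l)} v\|_2 \ge s_l \|v\|_2$, where $s_l$ is proportional to the summed seed strength $\sum_d H_L^{(h)}(d)$. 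Combined with the band estimate $\|P_{\mathcal{U}} N^{(l)} v\|_2 \ge (1-\alpha_l)\|P_{\mathcal{U}} v\|_2$ from (A1), the numerator is strictly enhanced beyond its pure-bulk value.

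For the denominator, (A1) also gives $\|P_{\mathcal{U}^\perp} N^{(l)} v\|_2 \le (1+\alpha_l)\|P_{\mathcal{U}^\perp} v\|_2$. Forming the quotient, the band factor $\tfrac{1-\alpha_l}{1+\alpha_l}$ falls out of the singular-value envelope, and the constructive seed contributes a multiplicative surplus $\beta_l$; taking $\beta_l$ proportional to $s_l$ normalized by the bulk gain $(1+\alpha_l)$ makes it a $v$-independent constant that is nonnegative and vanishes exactly when the seed does, thereby certifying the proportionality to $\sum_d H_L^{(h)}(d)$ asserted in the statement and recovering the neutral bound in the degenerate case.

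The hard part will be the "for any $v$" quantifier in conjunction with constructive addition: an adversarial $v$ lying almost entirely in $\mathcal{U}^\perp$ simultaneously threatens a vanishing bulk numerator $P_{\mathcal{U}} N^{(l)} v$ and possible misalignment against the seed. I would resolve this by leaning on the injection bound from (A2), which transmits a positive anchor signal into $\mathcal{U}$ irrespective of the bulk direction of $v$, so that $\|\Sigma^{(l)} v\|_2$ stays bounded away from zero uniformly; any residual misalignment I would absorb into a conservatively smaller but still positive $\beta_l$. A secondary technical point is verifying that the split $J^{(l)\top} = N^{(l)} + \Sigma^{(l)}$ respects the singular-value band on $N^{(l)}$ individually and not merely on the full sum, which I would justify from the pre-normalization structure underpinning (A1).
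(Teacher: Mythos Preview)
Your high-level decomposition matches the paper's: split $J^{(l)\top}$ into an attention-softmax piece and a residual/MLP bulk, pull the $(1-\alpha_l)/(1+\alpha_l)$ envelope from (A1) on the bulk, and source $\beta_l$ from the attention piece via (A2) and Lemma~\ref{lem:anchor_gain}.

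The gap is your structural claim $\operatorname{range}(\Sigma^{(l)}) \subseteq \mathcal U$. Lemma~\ref{lem:anchor_gain} explicitly computes the softmax Jacobian applied to an anchor direction as $J_i^\top e_{j^\star} = A_{i,j^\star}e_{j^\star} - A_{i,j^\star}A_{i,:}$, which carries an orthogonal component of norm up to $A_{i,j^\star}(1-A_{i,j^\star})$ --- equal in magnitude to the on-axis part. So the attention piece leaks into $\mathcal U^\perp$ and the denominator is not purely bulk; your clean separation $P_{\mathcal U^\perp}J^{(l)\top}v = P_{\mathcal U^\perp}N^{(l)}v$ does not hold as stated. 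Your later hedge about absorbing residual misalignment into a smaller $\beta_l$ tacitly concedes this, but once the containment fails the whole architecture of your argument (denominator purely bulk, numerator bulk-plus-seed) has to be redone. The paper does not attempt containment: it reads Lemma~\ref{lem:anchor_gain} as a \emph{ratio} statement --- on-anchor component at least as large as the orthogonal leakage --- which gives a net directional advantage despite the leakage, rather than in its absence.

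The second divergence is how you handle arbitrary $v$. You seek a uniform injection bound $\|\Sigma^{(l)}v\|_2 \ge s_l\|v\|_2$ from (A2), but (A2) only pins an inner product between a specific upstream-gradient row and a specific value row at the anchor, not for a generic incoming $v$; extracting a $v$-independent lower bound from it is not obviously possible. The paper instead establishes the directional advantage first on the basis vectors of $\mathcal U$ (the anchor rows), where (A2) and Lemma~\ref{lem:anchor_gain} apply directly, and then extends to all $v$ by what it calls a convexity argument over those anchor rows. That extension is itself only sketched in the paper, but it sidesteps the need for your uniform injection bound and is the piece your proposal is missing.
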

\begin{proof}
The Jacobian $J^{(l)}$ is decomposed into the within-attention part $\mathcal O^{(l)}$ and the rest of the block $\mathcal R^{(l)}$. By (A1), the $\mathcal R^{(l)}$ part contributes the factor $(1-\alpha_l)/(1+\alpha_l)$. The gain $\beta_l$ arises from $\mathcal O^{(l)}$. Lemma~\ref{lem:anchor_gain} establishes a directional gain for the anchor component within the softmax Jacobian. When composed with the value projection and the upstream gradient from (A2), the anchor direction receives a coherent signal. Aggregating over the anchor rows in subspace $\mathcal U$ yields a net directional advantage, which extends from the basis vectors of $\mathcal U$ to any vector $v$ by a convexity argument.
\end{proof}

\begin{proof}[Proof of Theorem \ref{thm:exp_amp}]
Applying Lemma~\ref{lem:dir_adv} to the backpropagation recurrence $g^{(l)} = J^{(l)\top} g^{(l+1)}$:
\begin{equation}
    \mathrm{SNR}_l = \frac{\|P_{\mathcal U} J^{(l)\top} g^{(l+1)}\|_2}{\|P_{\mathcal U^\perp} J^{(l)\top} g^{(l+1)}\|_2} \ge \gamma_l \cdot \frac{\|P_{\mathcal U} g^{(l+1)}\|_2}{\|P_{\mathcal U^\perp} g^{(l+1)}\|_2} = \gamma_l \cdot \mathrm{SNR}_{l+1}.
\end{equation}
Iterating this inequality from layer $L-1$ down to $l$ yields the exponential product. The same multiplicative logic applies to the vector components corresponding to the top two heads within the subspace $\mathcal U$, proving margin amplification.
\end{proof}

\subsection{Remark: Conceptual Link to Convolution Kernels}
It is helpful to conceptually frame the aggregated gradient $H^{(h)}_{L}(d)$ as a discrete convolution. If we define the empirical distribution of distances as a signal $\hat{\mu}_N$, and the expected gradient contribution for a given relative distance $\Delta$ as a "deposit kernel" $\kappa^{(h)}(\Delta)$, then the total aggregated gradient is their convolution: $H^{(h)}_{L} = N (\hat{\mu}_N * \kappa^{(h)})$.

From this perspective, our proof demonstrates a key difference in the structure of these implicit kernels:
\begin{itemize}
    \item For \textbf{RoPE}, the kernel $\kappa^{(h)}(\Delta)$ has a trigonometric structure due to the rotational mechanism. This structure is analogous to a positive-definite kernel, which resists being driven to zero and ensures a positive "seed" is deposited.
    \item For \textbf{ALiBi}, the kernel $\kappa^{(h)}(\Delta)$ is merely affine (linear plus a constant). This simple structure allows for exact cancellation when convolved with a symmetric distance distribution, explaining why no seed is guaranteed.
\end{itemize}
This intuitive framing aligns with the rigorous proof and reinforces the conclusion that the deposit pattern is an inherent property of RoPE's multiplicative design.

\section{Support Experiment}
\label{support}
\subsection{Experiment Setup}
\label{setup}

All experiments were performed on a single NVIDIA RTX 4090.

The experimental parameter settings of non-MLA are shown in Table~\ref{setpe}, and the experimental parameter settings of MLA are shown in Table~\ref{setmla}.

\begin{table}[ht]
  \centering
  \begin{tabular}{ll}
    \toprule
    \textbf{Setting}                      & \textbf{Value} \\ 
    \midrule
    Vocabulary size (\texttt{vocab\_size})      & 574 \\
    Model dimension ($d_{\mathrm{model}}$)       & 256 \\
    Feed-forward dimension ($d_{\mathrm{ff}}$)   & 512 \\
    Number of attention heads (\texttt{num\_heads}) & 16 \\
    Number of decoder layers (\texttt{num\_layers}) & 6 \\
    Dropout rate                             & 0.1 \\
    Maximum sequence length (\texttt{max\_len})    & 128 \\
    Positional encoding types tested         & \texttt{nope}, \texttt{absolute}, \texttt{alibi}, \texttt{relative}, \texttt{random}, \texttt{rope} \\
    Number of epochs                         & 100 (Task 1);150 (Task 2) \\
    Batch size                               & 512 or 1024 \\
    Optimizer                                & AdamW (lr=1e-4, betas=(0.98, 0.9), weight\_decay=1e-5) \\
    Learning-rate scheduler                  & Cosine schedule with warm-up (6\% of total steps) \\
    Loss function                            & Cross-entropy  \\
    Train/Test split                         & 70\% / 30\% \\
    Random seed                              & 0,42,70,113,130 \\
    \bottomrule
  \end{tabular}
  \caption{Hyperparameters and settings (PE). }
  \label{setpe}
\end{table}

\begin{table}[ht]
  \centering
  \begin{tabular}{ll}
    \toprule
    \textbf{Setting}                                  & \textbf{Value} \\ 
    \midrule
    Vocabulary size (\texttt{vocab\_size})            & 574 \\
    Model hidden dimension ($d_{\mathrm{model}}$)     & 256 \\
    Feed-forward dimension ($d_{\mathrm{ff}}$)        & 512 \\
    Number of logical MLA heads (\texttt{num\_heads}) & 16 (8 in fact) \\
    Compression dimension ($d_{\mathrm{compress}}$)   & 128 \\
    Number of decoder layers (\texttt{num\_layers})   & 6 \\
    Dropout rate                                      & 0.1 \\
    Maximum sequence length (\texttt{max\_len})       & 128 \\
    Positional embedding                              & Rotary (RoPE) \\
    Number of epochs                                  & 100 (Task 1); 150 (Task 2)\\
    Batch size                                        & 1024 \\
    Optimizer                                         & AdamW (lr=1e-4, betas=(0.98, 0.9), weight\_decay=1e-5) \\
    Learning-rate scheduler                           & Cosine warm-up (6\% of total steps) \\
    Loss function                                     & Cross-entropy \\
    Train/Test split                                  & 70\% / 30\% \\
    Random seed                                       & 0,42,70,113,130 \\
    \bottomrule
  \end{tabular}
  \caption{Hyperparameters and settings (MLA).}
  \label{setmla}
\end{table}

\begin{remark}
    Note that all our experiments used dropout, so the deposit pattern is not a phenomenon that can be improved by ordinary dropout.
\end{remark}

Since Task 2 did not have a position effect, our subsequent experiments were all on Task 1.

\subsection{Complete Visualization of Layer-by-Layer Head Ablation}\label{app:full-head-ablation}

For completeness, we provide the full layer-by-layer head ablation results for
the RoPE models trained on \textbf{Task~1}.  
These figures expand upon the summary violin plots in Fig.~4, showing the
\emph{per-head} accuracy after zeroing out each attention head at every layer.

\paragraph{How to read the plots.}
Each subplot corresponds to one Transformer layer, and each point plots the test
accuracy after ablating a single head.  
Large vertical drops indicate heads whose removal severely harms performance,
revealing where positional reasoning is concentrated.  
Flat curves indicate heads that contribute minimally and whose ablation produces
no measurable degradation.

\paragraph{6-layer RoPE model.}
Figure~\ref{fig:full2} shows the full ablation for the 6-layer model.  
The first layer contains a single dominant head whose ablation causes a large
accuracy collapse, while all remaining heads exhibit near-flat profiles across
layers.  
This fine-grained visualization matches the condensed “deposit pattern”
presented in the main text.

\begin{figure}[ht]
    \centering
    \includegraphics[width=0.7\linewidth]{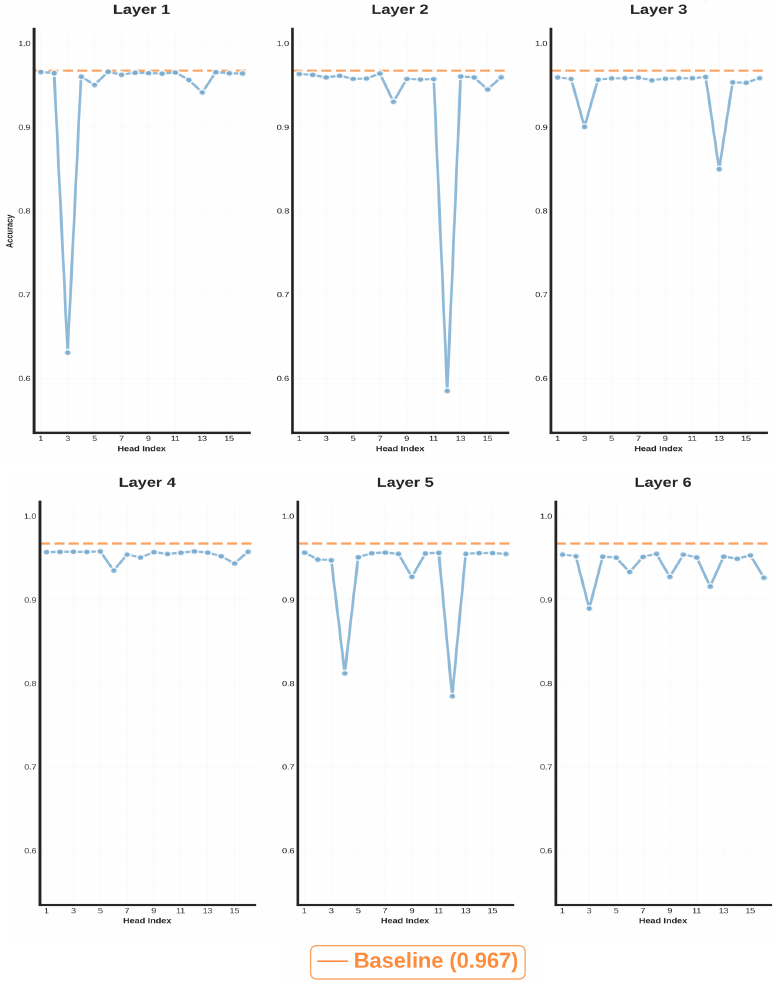}
    \caption{\textbf{Full head-ablation curves for the 6-layer RoPE model on Task 1.} Each subplot shows the test accuracy after ablating a single attention head in the corresponding layer. A single head in Layer 1 produces a catastrophic accuracy drop, while all other heads across all layers induce only minor changes. This confirms that RoPE concentrates positional reasoning into one early-layer head, even in shallower architectures.}
    \label{fig:full2}
\end{figure}

\paragraph{8-layer RoPE model.}
Figure~\ref{fig:full1} reports the same analysis for an 8-layer RoPE model.  
Despite the increased depth, the pattern is qualitatively identical: a single
early-layer head carries almost all positional reasoning, and deeper layers show
only small, noise-level fluctuations across heads.  
This demonstrates that the deposit pattern is not sensitive to depth and remains
stable across architectures.

\begin{figure}[ht]
    \centering
    \includegraphics[width=\linewidth]{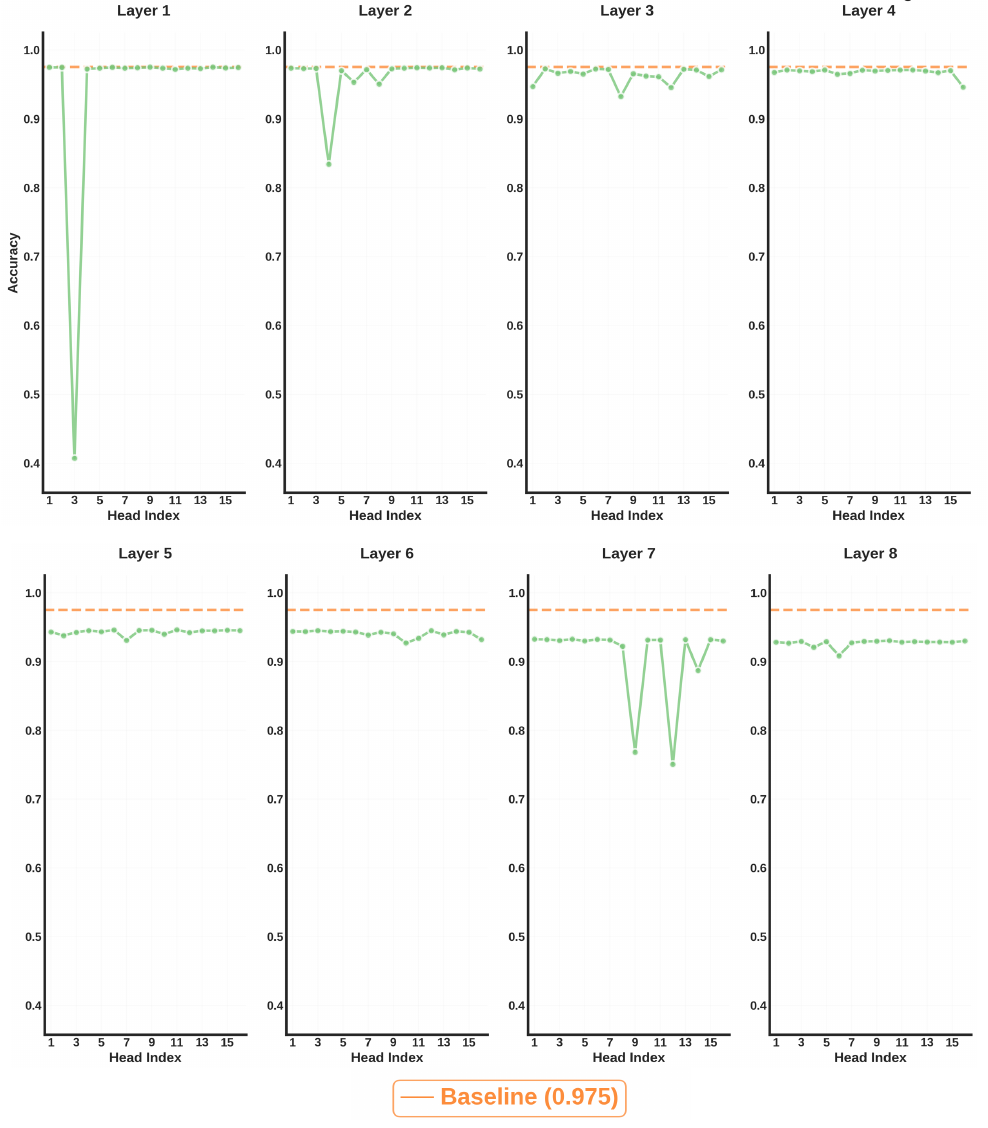}
    \caption{\textbf{Full head-ablation curves for the 8-layer RoPE model on Task 1.} The same localization pattern reappears at larger depth: a unique critical head in Layer 1 causes a large accuracy collapse, and all remaining heads across Layers 2–8 have negligible impact. The persistence of this topology across depth demonstrates the stability and robustness of the RoPE deposit pattern.}
    \label{fig:full1}
\end{figure}

\paragraph{Summary.}
These complete ablation maps confirm that the deposit pattern is a robust and
highly localized effect: RoPE consistently routes positional reasoning into one
head in the earliest layers, with all other heads acting effectively
position-agnostic for this task.

\subsection{Experiments of Head Ablation on Other PE Methods}
In this subsection, we use violin plots to show the ablation experiment results of Alibi, Absolute PE, and Relative PE.

In Figure~\ref{otherpe}, we can find that the position information of relative position encoding and ALiBi are scattered, while the absolute position encoding has some significant heads and some insignificant heads in the shallow layer, and the whole is oscillating.

\begin{figure}[ht]
    \centering
    \includegraphics[width=\textwidth]{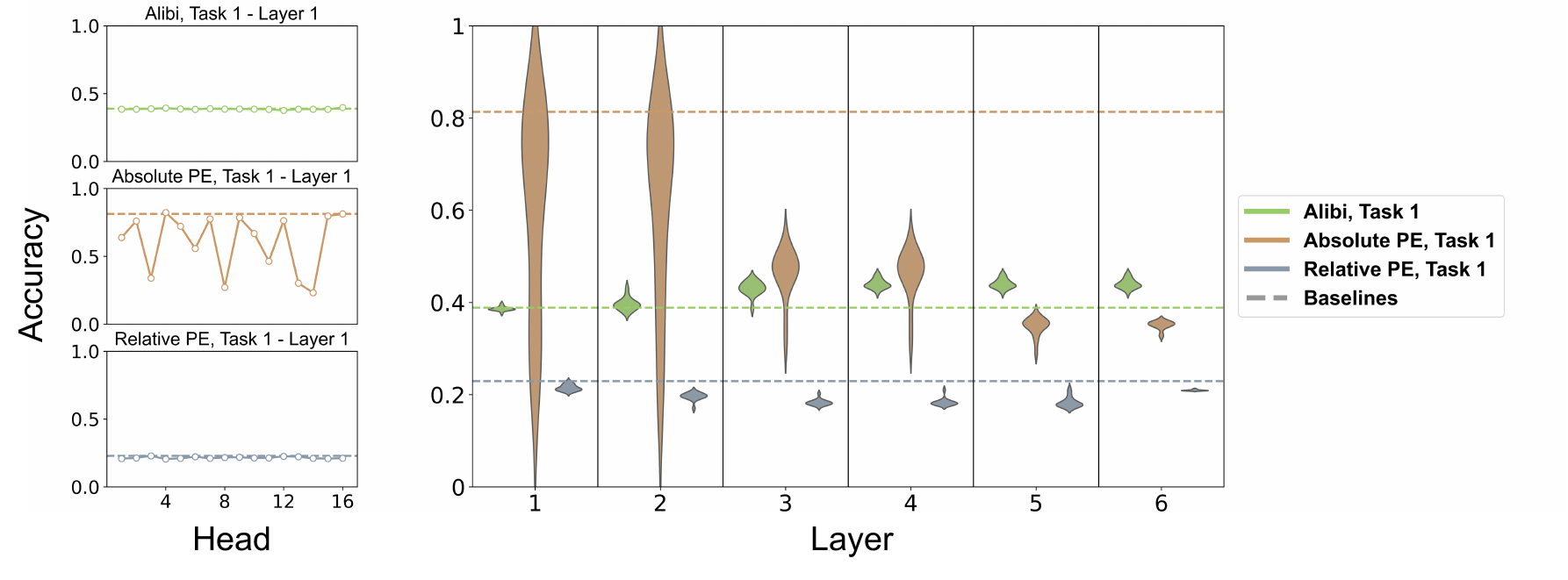}
    \caption{Ablation experiments on other PE methods.}
    \label{otherpe}
\end{figure}

The similarities among NoPE, Relative PE, and ALiBi are consistent with previous studies on how NoPE obtains position information through causal masks~\cite{kazemnejad2023impact,haviv-etal-2022-transformer}.

\subsection{Some Experiments on Partial RoPE}

It is foreseeable that MLA can fully alleviate the deposit pattern at the expense of a little generalization ability. We also conducted some ablation experiments in the experiments of RoPE for some heads and NoPE for some heads. Interestingly, we found that the most significant deposit pattern is not necessarily in the first layer, but may be in the second or third layer. In some cases, the first layer does not even have a deposit pattern. We can imagine that this is the effect of using NoPE attention heads in deeper layers. This inspires us that the way the FFN affects the position encoding is far more complicated than we imagined. We give the partial ablation line graph of the first k heads doing RoPE layer l below.

\begin{figure}[ht]
    \centering
    \includegraphics[width=\textwidth]{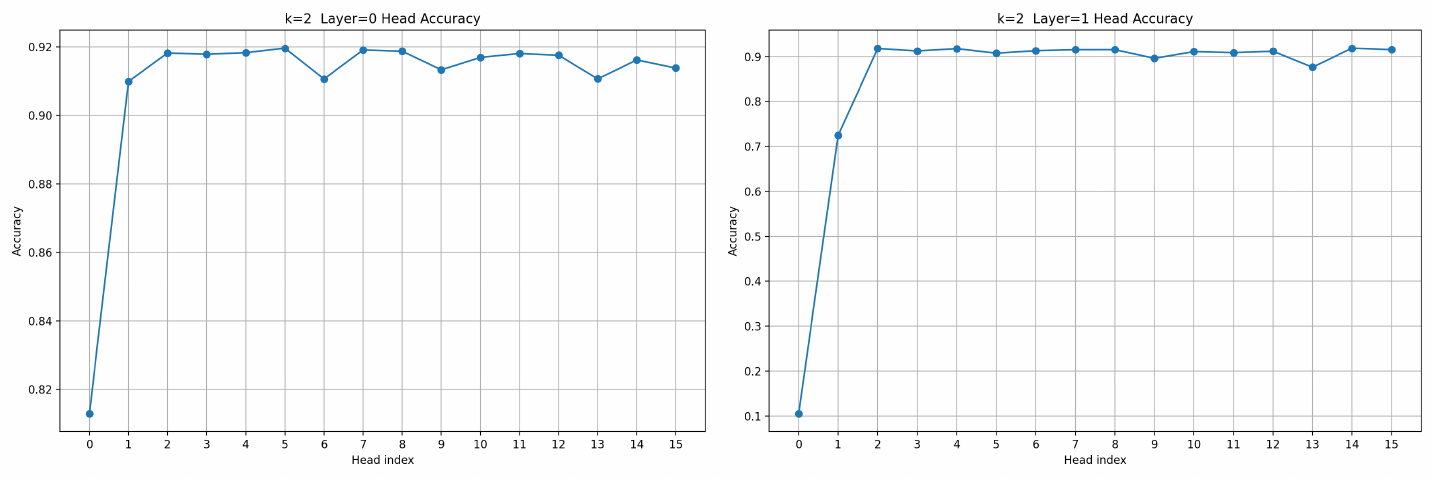}
    \caption{$k=2 $, $l=1,2$.}
    \label{21}
\end{figure}

\begin{figure}[ht]
    \centering
    \includegraphics[width=\textwidth]{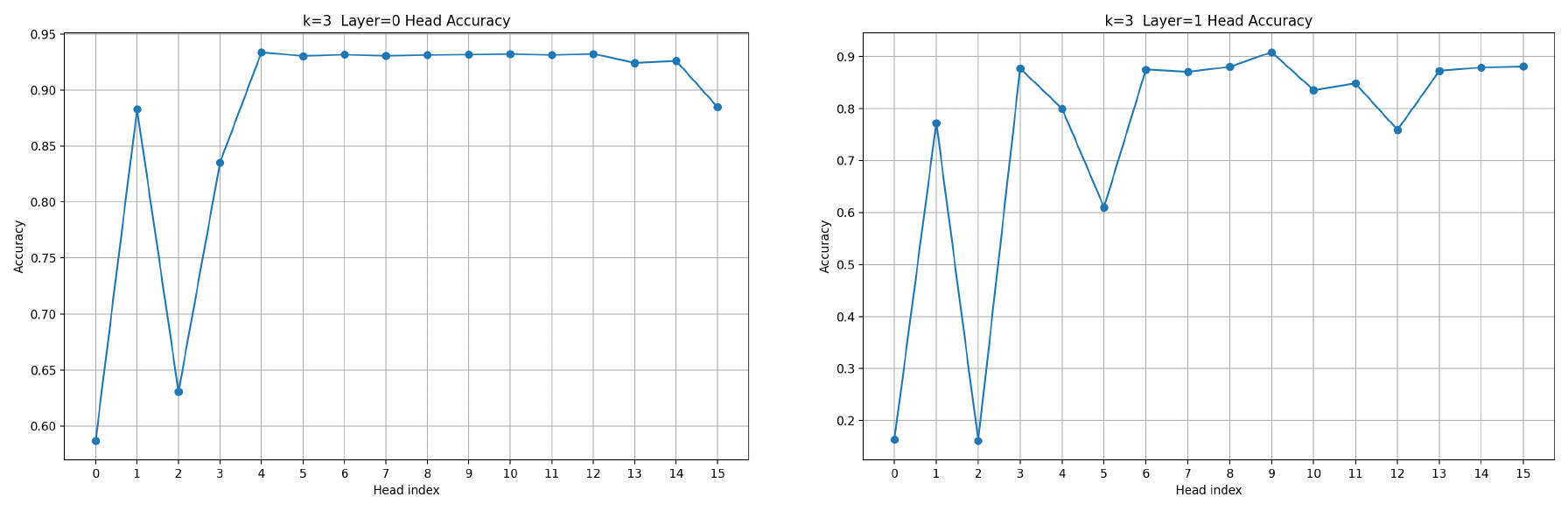}
    \caption{$k=3 $, $l=1,2$.}
    \label{31}
\end{figure}

\begin{figure}[ht]
    \centering
    \includegraphics[width=\textwidth]{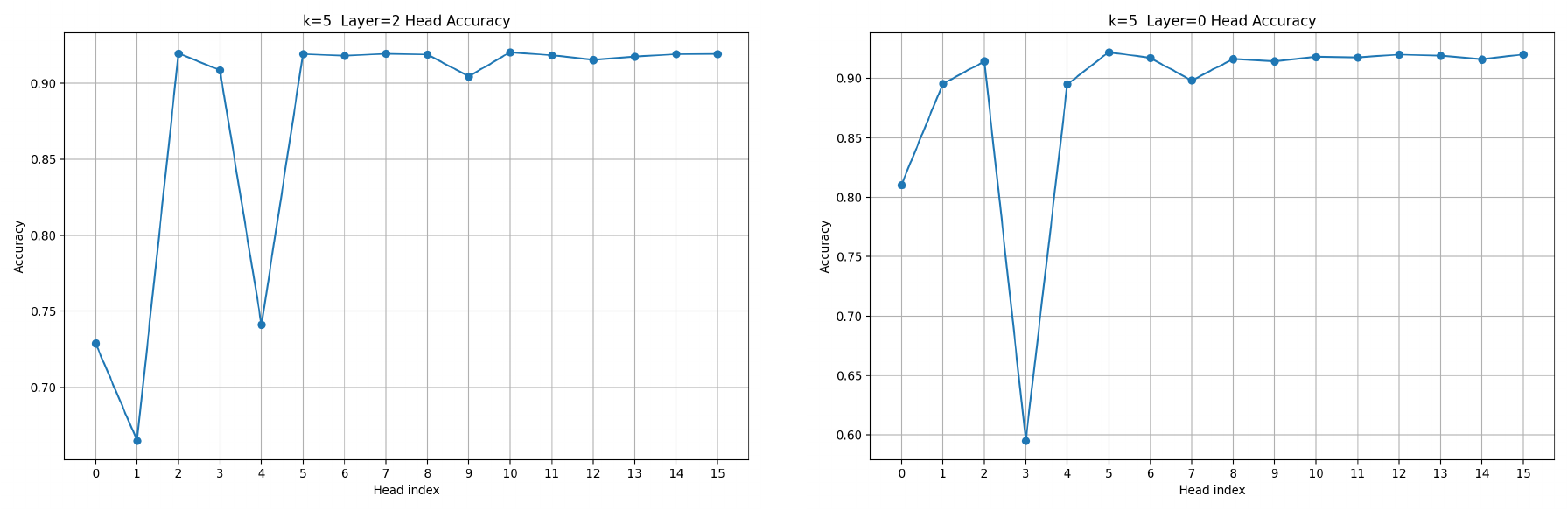}
    \caption{$k=5 $,  $l=1,3$.}
    \label{52}
\end{figure}

\begin{figure}[ht]
    \centering
    \includegraphics[width=\textwidth]{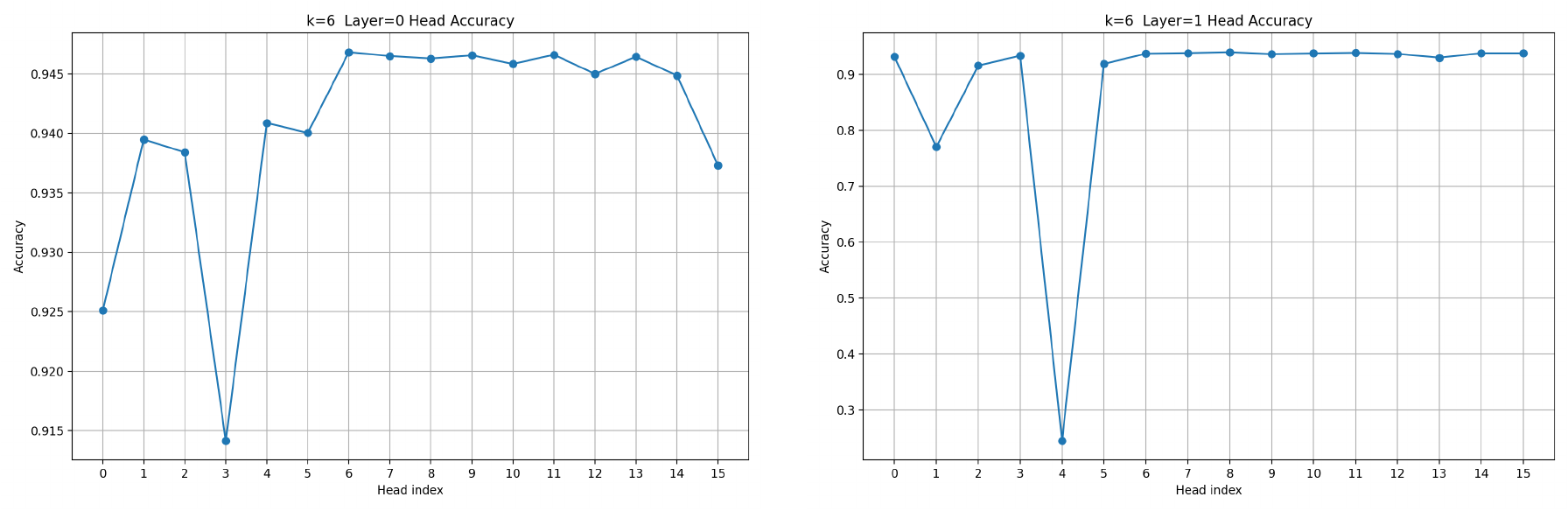}
    \caption{$k=6 $, $l=1,2$.}
    \label{61}
\end{figure}

\begin{figure}[ht]
    \centering
    \includegraphics[width=\textwidth]{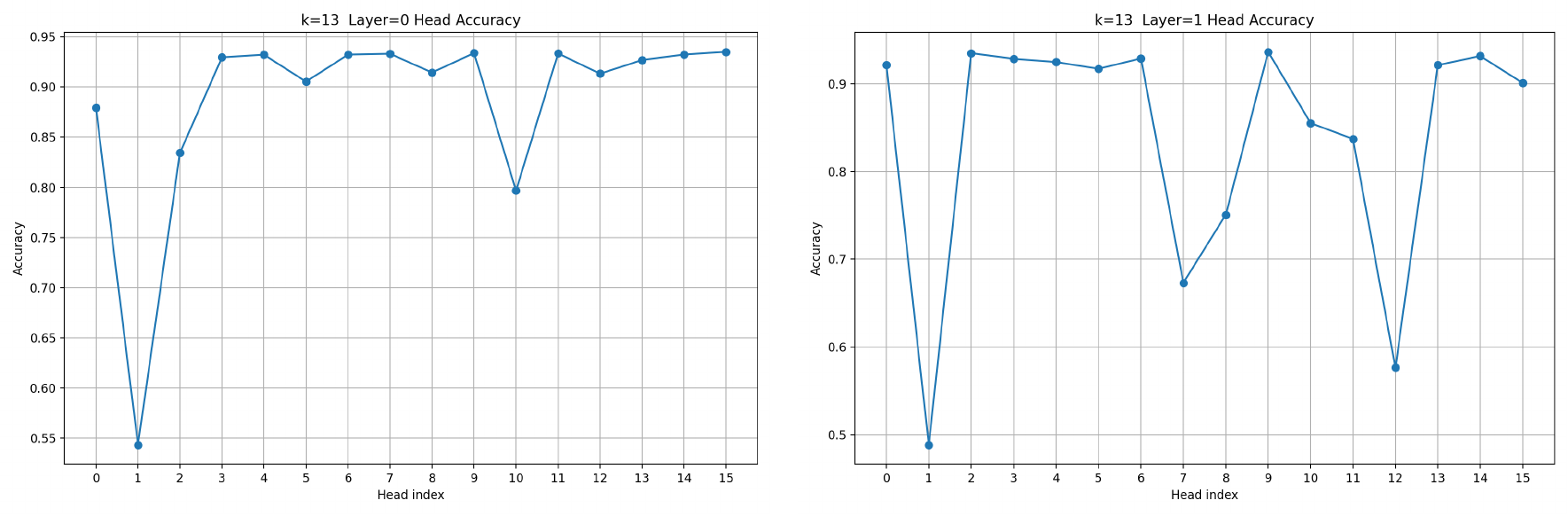}
    \caption{$k=13 $, $l=1,2$.}
    \label{131}
\end{figure}

\begin{figure}[ht]
    \centering
    \includegraphics[width=\textwidth]{Figure/AppendixFig/21.pdf}
    \caption{$k=15 $, $l=2,4$.}
    \label{152}
\end{figure}

\subsection{Why Position Embedding Complex?}

We conclude this section by explaining why understanding positional encodings is difficult and almost impossible to decouple.

We first give our conclusion that the position component in the token comes from manifold embedding rather than subspace decomposition, which means that extracting the position component requires us to have an algorithm to solve the changes in the position manifold. In an extremely ideal case, we can use the Levinson–Durbin algorithm in signal processing to recursively solve it through the value of the Toeplitz matrix. However, extracting a Toeplitz matrix from a general matrix is not unique, and we cannot determine whether the extracted Toeplitz matrix truly represents the position information.

This conclusion comes from a very simple observation: the position embedding of absolute position encoding is not orthogonal to word embedding. Therefore, there is no subspace decomposition that can completely separate the two components.

If we denote f as a continuous embedding from $S_1\otimes S_1...\otimes S_1$ to $\mathbb{R}^n$, and g as a continuous embedding from $\mathbb{R}_+$ to $S_1\otimes S_1...\otimes S_1$, then the essence of absolute position encoding is actually $f(g(i))$. In other words, the explicit position encoding that we can understand is essentially a manifold embedding.

The general method of dealing with manifold embeddings requires a specific embedding mapping. For general positional encodings, this mapping is not solvable, so it is almost impossible to decouple the positional encoding.
\section{Relationship to Length Generation}
\label{relation}
Here, we give a more detailed qualitative statement to describe our thinking on the relationship between deposit patterns and length generation.

\paragraph{Why the deposit pattern reflects a training bias rather than modularity.}
Although the deposit pattern superficially resembles a form of head-level
modularity, our evidence suggests that it is instead a training-induced bias
specific to RoPE’s multiplicative structure.  
First, multiplicative PE suppresses activation-level positional directions
(Section~\ref{abl4}), causing early-layer heads to compete for the
positional signal. Once a single head captures the content–position interaction,
RoPE’s strong inductive bias quickly eliminates positional gradients for the
other heads, preventing them from acquiring similar capabilities.  
This is fundamentally different from deliberate modularization: the specialization
emerges not because the model decomposes the task across heads, but because the
training dynamics collapse positional reasoning into the earliest head that
happens to align with the RoPE kernel.

Second, the deposit head does not learn a general positional module; it learns a
narrow set of frequency bands that are amplified by RoPE’s rotation mechanism.
As noted in~\cite{HuaFoPE2025}, only a subset of frequencies is sufficiently
trained, and their influence is later propagated through feed-forward mixing.
This leads to a compressed, low-rank representation of positional structure,
which is efficient for the training distribution but fragile outside it.

Taken together, the deposit pattern is better interpreted as a \emph{training
bias toward early collapse of positional responsibility}, rather than evidence
for a stable or interpretable modular decomposition of positional reasoning.
RoPE’s multiplicative interaction creates a “winner-takes-all” dynamic, where
one head monopolizes positional information not because the architecture
encourages modularity, but because the optimization process and frequency
structure favor such collapse.

\paragraph{Further evidence from tasks beyond pairwise position reasoning.}

RoPE enables efficient pairwise position–content coupling but does not equip the
model with the compositional machinery needed for length-generalizing tasks that
require combining several positional relations.
To test whether the deposit head learned by RoPE represents a reusable
“positional module,” we designed a family of tasks requiring the model to 
combine multiple pairwise distances (e.g., predicting the sum of several 
trigger-word distances).  
Despite the task being a straightforward extension of our synthetic setup, all
RoPE-based models failed to generalize beyond the training length, even when the
training accuracy was near-perfect.  
In contrast to the pairwise distance task—where positional responsibility
collapses into a single head—the multi-trigger task requires the model to
compose several independent positional relations, a capability that is
architecturally challenging for standard Transformers.

This observation aligns with prior findings on tasks such as Dyck languages,
context-free composition, and stack-like dependencies
(e.g.,~\cite{chi-etal-2023-dissecting}): 
Transformers struggle when a task requires retrieving, combining, or
manipulating multiple positional relations simultaneously.  
The failure of RoPE to generalize in our multi-trigger setting therefore does
not indicate a lack of training or capacity, but rather reveals that the
single-head deposit behavior does \emph{not} constitute a modular or
compositional positional representation.  
Instead, the deposit head encodes a narrow, task-specific mapping tied to the
training distribution, and—unlike a true module—it cannot be recombined or
composed when the task demands multiple positional deductions.

Taken together, these results reinforce that the deposit pattern reflects a
training-induced collapse of positional responsibility rather than the
emergence of a reusable module.  
RoPE enables efficient pairwise position–content coupling but does not equip the
model with the compositional machinery needed for length-generalizing tasks that
require combining several positional relations.

More than one study has shown that transformer's understanding of position information starts from the first layer~\cite{zuo-etal-2025-position,chi-etal-2023-dissecting,kazemnejad2023impact}. The key to successful length generalization is that responses to relative position information within the training length can be naturally transferred to a longer length (test length). Therefore, the lower the coupling between location and content, the better the scalability. If the coupling degree is high, it is necessary to learn the coupling mode. At the same time, Deposit Patterns shows that only some frequency bands are fully trained during training like~\cite{HuaFoPE2025}. These frequency bands are passed through the fully connected layer, spreading the influence to most of the frequency bands in the deep layer, which means that in fact only specific relative position differences are considered by the training process, so the length generalization ability is drastically affected.

\paragraph{Relationship with massive value in RoPE.}

Recent empirical studies have reported that Transformers trained with RoPE often develop a small number of extremely high-norm rows in $W_Q$ and $W_K$~\citep{jinMassiveValuesSelfAttention2025}, a behavior not observed under additive positional encodings. Although this phenomenon is not the focus of our work, the Toeplitz formulation naturally explains its underlying mechanism.

RoPE mixes content and position multiplicatively through a frequency-indexed Toeplitz kernel $G_{\mathbf e}$. In our formulation, each logit entry takes the form
\[
\langle q_i ,k_j\rangle_{\mathrm{RoPE}}
= \langle q_i ,k_j\rangle \circ (G_{\mathbf e})_{i-j},
\]
where $(G_{\mathbf e})_{i-j}$ is a complex rotation encoding relative displacement.
Importantly, each frequency band of RoPE corresponds to a fixed Toeplitz diagonal, and gradients flowing through that diagonal are accumulated entirely on the rows of $W_Q, W_K$ associated with the same frequency.

This creates a ``frequency bottleneck’’:
\begin{itemize}
    \item if a particular diagonal of the Toeplitz kernel becomes predictive early in training,
    \item then its associated frequency bands receive disproportionately large gradient flow,
    \item causing the corresponding rows of $W_Q, W_K$ to grow rapidly in norm,
    \item while other frequencies remain under-trained.
\end{itemize}

Thus the massive-value rows arise as a direct consequence of gradient concentration onto a small subset of Toeplitz diagonals, not from instability of the optimization process. Additive PEs do not exhibit this effect because their position signal enters only through additive $p_i$–based cross-terms, so no frequency-indexed diagonals receive concentrated multiplicative amplification.

Under our unified framework, the massive-value phenomenon is therefore a natural by-product of RoPE’s multiplicative structure, in which positional information is stored directly in selected frequency bands of $W_Q, W_K$ rather than in explicit activation-level positional vectors.

\end{document}